\newtheorem{post}{Postulate}
\newtheorem{thrm}{Theorem}
\newtheorem{dfnt}{Definition}
\newtheorem{lema}{Lemma}
\DeclareMathOperator*{\argmin}{arg\,min}
\DeclareMathOperator{\tr}{tr}
\title{Causal Inference and Mechanism Clustering of a Mixture of Additive Noise Models}
\author{
  Shoubo Hu{\small $^{\ast}$}, Zhitang Chen{\small $^{\dagger}$}, Vahid Partovi Nia{\small $^{\dagger}$}, Laiwan Chan{$^{\ast}$}, Yanhui Geng{\small $^{\ddagger}$} \\
%   \textbf{Laiwan Chan}{$^{\ast}$}\textbf{,} \textbf{Yanhui Geng}{\small $^{\dagger}$} \\
  $^{\ast}$The Chinese University of Hong Kong; $^{\dagger}$Huawei Noah's Ark Lab; \\ $^{\ddagger}$Huawei Montr\'{e}al Research Center \\
  $^{\ast}$\texttt{\{sbhu, lwchan\}@cse.cuhk.edu.hk } \\
  $^{\dagger}$$^{\ddagger}$\texttt{\{chenzhitang2, vahid.partovinia, geng.yanhui\}@huawei.com }
  %% examples of more authors
%   \And
%   Zhitang Chen\thanks{Huawei Noah's Ark Lab, Hong Kong} \\
%   \texttt{chenzhitang2@huawei.com} \\
%   \And
%   Vahid Partovi Nia\thanks{Huawei Noah's Ark Lab, Montr\'{e}al} \\
%   \texttt{vahid.partovinia@huawei.com} \\
%   \And
%   Laiwan Chan\footnotemark[1]\\
%   \texttt{lwchan@cse.cuhk.edu.hk} \\
%   \And
%   Yanhui Geng\footnotemark[3] \\
%   \texttt{geng.yanhui@huawei.com} \\
  %% \And
  %% Coauthor \\
}
\begin{document}

\maketitle

\begin{abstract}
	The inference of the causal relationship between a pair of observed variables is a fundamental problem in science, and most existing approaches are based on one single causal model. In practice, however, observations are often collected from multiple sources with heterogeneous causal models due to certain uncontrollable factors, which renders causal analysis results obtained by a single model skeptical. In this paper, we generalize the Additive Noise Model (ANM) to a mixture model, which consists of a finite number of ANMs, and provide the condition of its causal identifiability. To conduct model estimation, we propose Gaussian Process Partially Observable Model (GPPOM), and incorporate independence enforcement into it to learn latent parameter associated with each observation. Causal inference and clustering according to the underlying generating mechanisms of the mixture model are addressed in this work. Experiments on synthetic and real data demonstrate the effectiveness of our proposed approach.
\end{abstract}

\section{Introduction}
\label{intro}

Understanding the data-generating mechanism (g.m.) has been a main theme of causal inference. To infer the causal direction between two random variables (r.v.s) $X$ and $Y$ using passive observations, most existing approaches first model the relation between them using a functional model with certain assumptions \citep{shimizu2006linear, hoyer2009nonlinear, zhang2009identifiability, janzing2012information}. Then a certain asymmetric property (usually termed \textit{cause-effect asymmetry}), which only holds in the causal direction, is derived to conduct inference. For example, the additive noise model (ANM) \citep{hoyer2009nonlinear} represents the effect as a function of the cause with an additive independent noise: $Y = f(X) + \epsilon$. It is shown in \citep{hoyer2009nonlinear} that there is no model of the form $X = g(Y) + \tilde{\epsilon}$ that admits an ANM in the anticausal direction for most combinations $\left(f, p(X), p(\epsilon)\right)$.

Similar to ANM, most causal inference approaches based on functional models, such as LiNGAM \citep{shimizu2006linear}, PNL \citep{zhang2009identifiability}, and IGCI \citep{janzing2010causal}, assume a single causal model for all observations. However, there is no such a guarantee in practice, and it could be very common that the observations are generated by a mixture of causal models due to different data sources or data collection under different conditions, rendering existing single-causal-model based approaches inapplicable in many problems (e.g. Fig. \ref{Fig:toy}). Recently, an approach was proposed for inferring the causal direction of mixtures of ANMs with discrete variables \citep{liu2016causal}. However, the inference of such mixture models with continuous variables remains a challenging problem and is not yet well studied.

Another question regarding mixture models addressed in this paper is how one could reveal causal knowledge in clustering tasks. Specifically, we aim at finding clusters consistent with the causal g.m.s of a mixture model, which is usually vital in the preliminary phase of many research. For example in the analysis of air data (see section \ref{real:exp} for detail), discovering knowledge from air data combined from several different regions (i.e. mechanisms in causal perspective) is much more difficult than from data of each region separately. Most existing clustering algorithms are weak for this perspective as they typically define similarity between observations in the form of distances in some spaces or manifolds. Most of them neglect the relation among r.v.s within a feature vector (observation), and only use those feature dimensions to calculate an overall distance metric as the clustering criterion. 

\begin{figure}[t]
	\begin{subfigure}{.32\linewidth}
		\centering
		\includegraphics[width=\linewidth]{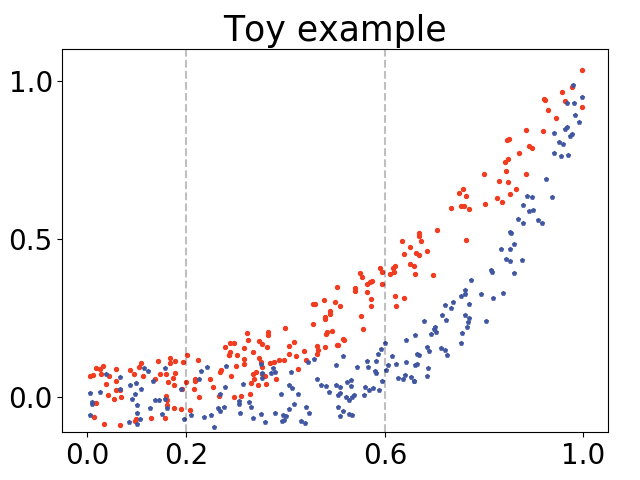}
		\caption{}
		\label{Fig:toy_dist}
	\end{subfigure}
	\begin{subfigure}{.32\linewidth}
		\centering
		\includegraphics[width=\linewidth]{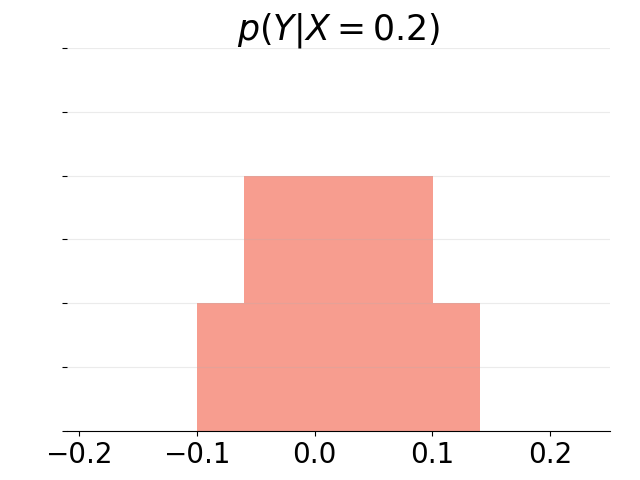}
		\caption{}
		\label{Fig:toy_p02}
	\end{subfigure}
	\begin{subfigure}{.32\linewidth}
		\centering
		\includegraphics[width=\linewidth]{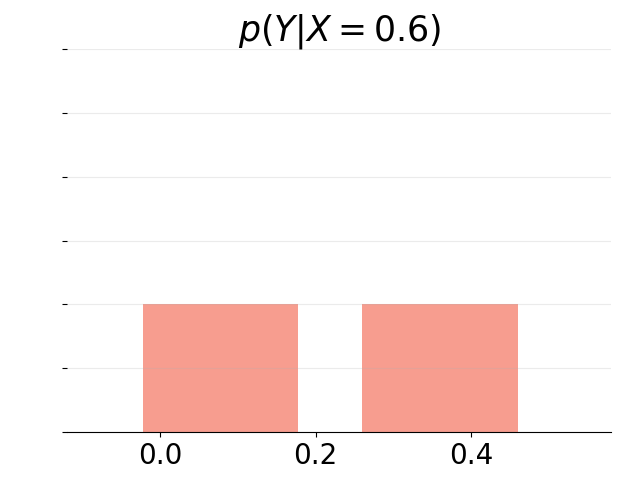}
		\caption{}
		\label{Fig:toy_p06}
	\end{subfigure}
    \caption{Example illustrating the failure of ANM on the inference of a mixture of ANMs (a) the distribution of data generated from $M_1 : Y = X^{2} + \epsilon$ (red) and $M_2 : Y = X^{5} + \epsilon$ (blue), where $X \sim U(0, 1)$ ($x$-axis) and $\epsilon \sim U(-0.1, 0.1)$ ; (b) Conditional $p(Y|X = 0.2)$; (c) Conditional $p(Y|X = 0.6)$. It is obvious that when the data is generated from a mixture of ANMs, the consistency of conditionals is likely to be violated which leads to the failure of ANM.}
    \label{Fig:toy}
\end{figure}

In this paper, we focus on analyzing observations generated by a mixture of ANMs of two r.v.s and try to answer two questions: 1) \textit{causal inference:} how can we infer the causal direction between the two r.v.s?  2) \textit{mechanism clustering:} how can we cluster the observations generated from the same g.m. together? To answer these questions, first as the main result of this paper, we show that the causal direction of the mixture of ANMs is identifiable in most cases, and we propose a variant of GP-LVM \citep{lawrence2005probabilistic} named Gaussian Process Partially Observable Model (GPPOM) for model estimation, based on which we further develop the algorithms for causal inference and mechanism clustering. 

The rest of the paper is organized as follows: in section 2, we formalize the model, show its identifiability and elaborate mechanism clustering; in section 3, model estimation method is proposed; we present experiments on synthetic and real world data in section 4 and conclude in section 5.

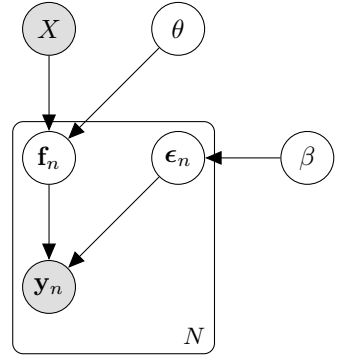
\begin{wrapfigure}[12]{r}{0.4\textwidth}
\vspace{-19mm}
  \begin{center}
    \begin{tikzpicture}
    % Define nodes
    \node[obs] (y) {$\mathbf{y}_{n}$};
    \node[latent, above=of y] (f) {$\mathbf{f}_{n}$};
    \node[obs, above=of f] (X) {$X$};
    \node[latent, right=1.0cm of X] (tha) {$\theta$};
    \node[latent, right=1.0cm of f] (e) {$\bm{\epsilon}_{n}$};
    \node[latent, right=1.0cm of e] (beta){$\beta$};

    % Connect the nodes
    \edge {f,e} {y}; %
    \edge {tha} {f};
    \edge {X} {f};
    \edge {beta} {e};
    % Plates
    \plate {yx} {(y)(f)(e)} {$N$};
    \end{tikzpicture}
  \end{center}
  \caption{ANM Mixture Model}
  \label{fig:dag}
  \vspace{-2mm}
\end{wrapfigure} 

\section{ANM Mixture Model}
\subsection{Model definition}
Each observation is assumed to be generated from an ANM and the entire data set is generated by a finite number of related ANMs. They are called the ANM Mixture Model (ANM-MM) and formally defined as:

\begin{dfnt}[ANM Mixture Model]
\label{def:anmmm}
	An ANM Mixture Model is a set of causal models of the same causal direction between two continuous r.v.s $X$ and $Y$. All causal models share the same form given by the following ANM:
    \begin{align}
    	Y = f(X;\theta) + \epsilon,
        \label{mix_anm}
    \end{align}
where $X$ denotes the cause, $Y$ denotes the effect, $f$ is a nonlinear function parameterized by $\theta$ and the noise $\epsilon \perp\!\!\!\perp  X$. The differences between causal models in an ANM-MM stem only from different values of function parameter $\theta$. In ANM-MM, $\theta$ is assumed to be drawn from a discrete distribution on a finite set $\Theta = \{\theta_1,\cdots,\theta_{C}\}$, i.e. $\theta \sim p_{\theta}(\theta)= \sum_{c=1}^{C} a_{c} \mathbf{1}_{\theta_{c}}(\cdot)$, where $a_{c}>0$, $\sum_{c=1}^{C} a_c =1$ and $\mathbf{1}_{\theta_{c}}(\cdot)$ is the indicator function of a single value $\theta_{c}$.
\end{dfnt}

Obviously in ANM-MM, all observations are generated by a set of g.m.s, which share the same function form ($f$) but differ in parameter values ($\theta$). This model is inspired by commonly encountered cases where the data-generating process is slightly different in each independent trial due to the influence of certain external factors that one can hardly control. In addition, these factors are usually believed to be independent of the observed variables. The data-generating process of ANM-MM can be represented by a directed graph in Fig. \ref{fig:dag}.

\subsection{Causal inference: identifiability of ANM-MM}
Let $X$ be the cause and $Y$ be the effect ($X \to Y$) without loss of generality. As most recently proposed causal inference approaches, following postulate, which was originally proposed in \citep{daniusis2012inferring}, is adopted in the analysis of ANM-MM.
\begin{post}[Independence of input and function]
\label{indp_post}
If $X \to Y$, the distribution of $X$ and the function $f$ mapping $X$ to $Y$ are independent since they correspond to independent mechanisms of nature.
\end{post}

In a general perspective, postulate 1 essentially claims the independence
between the cause ($X$) and mechanism mapping the cause to effect \citep{janzing2010causal}. In ANM-MM, we interpret the independence between the cause and mechanism in an intuitive way: $\theta$, as the function parameter, captures all variability of mechanisms $f$ so it should be independent of the cause $X$ according to postulate \ref{indp_post}. Based on the independence between $X$ and $\theta$, cause-effect asymmetry could be derived to infer the causal direction. 

Since ANM-MM consists of a set of ANMs, the identifiability result of ANM-MM can be a simple corollary of that in \citep{hoyer2009nonlinear} when the number of ANMs ($C$) is equal and there is a one-to-one correspondence between mechanisms in the forward and backward ANM-MM. In this case the condition of ANM-MM being unidentifiable is to fulfill $C$ ordinary differential equations given in \citep{hoyer2009nonlinear} simultaneously which can hardly happen in a generic case. However, $C$ in ANM-MM in both directions may not necessarily be equal and there may also exist many-to-one correspondence between ANMs in both directions. In this case, the identifiability result can not be derived as a simple corollary of \citep{hoyer2009nonlinear}. To analyze the identifiability result of ANM-MM, we first derive lemma \ref{Thm:no_backward_anm} to find the condition of existence of many-to-one correspondence (which is a generalization of the condition given in \citep{hoyer2009nonlinear}), then conclude the identifiability result of ANM-MM (theorem \ref{Thm:no_backward_anm_mm}) based on the condition in lemma \ref{Thm:no_backward_anm}. The condition that there exists one backward ANM for a forward ANM-MM is: 

\begin{lema}\label{Thm:no_backward_anm}
Let $X \to Y$ and they follow an ANM-MM. If there exists a backward ANM in the anti-causal direction, i.e.
\[
X = g(Y) + \tilde{\epsilon},
\]
the cause distribution ($p_{X}$), the noise distribution ($p_{\epsilon}$), the nonlinear function ($f$) and its parameter distribution ($p_{\theta}$) should jointly fulfill the following ordinary differential equation (ODE)
\begin{equation}\label{Eq:ode}
\xi''' - \frac{G(X,Y)}{H(X,Y)} \xi'' = \frac{G(X,Y)V(X,Y)}{U(X,Y)} - H(X,Y), 
\end{equation}
where $\xi := \log p_{X}$, and the definitions of $G(X, Y)$, $H(X,Y)$, $V(X,Y)$ and $U(X,Y)$ are provided in supplementary due to the page limitation.
\end{lema}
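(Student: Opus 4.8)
\medskip
\noindent\emph{Proof proposal.}\quad The plan is to run, on the mixture density, the same kind of argument that \citep{hoyer2009nonlinear} use for a single ANM. First I would write down the joint density of $(X,Y)$ under the forward ANM-MM: since $\theta$ takes the values $\theta_1,\dots,\theta_C$ with weights $a_1,\dots,a_C$ and, by Postulate~\ref{indp_post}, $\theta\perp\!\!\!\perp X$, on the open region where it is positive the density is
\begin{equation*}
p(x,y)=p_X(x)\sum_{c=1}^{C}a_c\,p_\epsilon\bigl(y-f(x;\theta_c)\bigr).
\end{equation*}
Throughout I would assume, as in the single-ANM case, that $p_X$, $p_\epsilon$ and $x\mapsto f(x;\theta_c)$ are three times differentiable on that region. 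Set $\pi(x,y):=\log p(x,y)$, $\xi:=\log p_X$ and $R(x,y):=\sum_{c}a_c\,p_\epsilon\bigl(y-f(x;\theta_c)\bigr)$, so that $\pi(x,y)=\xi(x)+\log R(x,y)$.

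Next I would extract the constraint that the existence of a backward ANM imposes on $\pi$. If $X=g(Y)+\tilde\epsilon$ with $\tilde\epsilon\perp\!\!\!\perp Y$, then on the same region $p(x,y)=p_Y(y)\,p_{\tilde\epsilon}\bigl(x-g(y)\bigr)$, so $\partial_x\pi(x,y)=(\log p_{\tilde\epsilon})'\bigl(x-g(y)\bigr)$ depends on $(x,y)$ only through $x-g(y)$. Differentiating once more in $x$ and once in $y$, $\partial^2_{xx}\pi$ and $\partial^2_{xy}\pi$ become proportional with a factor $-g'(y)$ not depending on $x$; equivalently, wherever $\partial^2_{xx}\pi\neq 0$,
\begin{equation*}
\partial_x\!\left(\frac{\partial^2_{xy}\pi}{\partial^2_{xx}\pi}\right)=0,
\qquad\text{i.e.}\qquad
\partial^3_{xxy}\pi\cdot\partial^2_{xx}\pi=\partial^2_{xy}\pi\cdot\partial^3_{xxx}\pi .
\end{equation*}
This step uses only the backward model and is therefore identical to the single-ANM case of \citep{hoyer2009nonlinear}; the mixture structure enters only at the next step.

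Finally I would substitute $\pi=\xi(x)+\log R(x,y)$ into this identity. Writing $A:=\partial_x\log R$ (an explicit function of $\log p_\epsilon$, the $f(\cdot;\theta_c)$ and the weights $a_c$), one has $\partial^2_{xx}\pi=\xi''+A_x$, $\partial^3_{xxx}\pi=\xi'''+A_{xx}$, $\partial^2_{xy}\pi=A_y$ and $\partial^3_{xxy}\pi=A_{xy}$, so the condition reduces to $A_{xy}(\xi''+A_x)=A_y(\xi'''+A_{xx})$; isolating $\xi'''$ gives a first-order linear ODE in $\xi''$ (with $Y$ treated as a parameter), which becomes exactly~\eqref{Eq:ode} once the partials of $A$ are expanded in terms of $\log p_\epsilon$, $f(\cdot;\theta_c)$ and the $a_c$ and the coefficients are grouped and renamed $G,H,U,V$ as in the supplement. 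As a sanity check, for $C=1$ this procedure reproduces the ODE of \citep{hoyer2009nonlinear}. The main obstacle is precisely this last grouping: unlike the single-ANM case, $\log R$ is the logarithm of a sum over the $C$ mechanisms, so $A_x$, $A_{xx}$ and $A_{xy}$ do not collapse into compositions of one-dimensional functions, and the resulting double sums over the mechanism index must be handled carefully to separate the coefficient of $\xi''$ from the inhomogeneous term --- this is exactly what the auxiliary functions $G,H,U,V$ are designed to absorb. A secondary point to watch is that the identity only holds off the zero sets of the relevant denominators ($\partial^2_{xx}\pi$, and $H$ and $U$ after the rearrangement), which are understood to be non-generic.
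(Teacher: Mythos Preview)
Your proposal is correct and follows essentially the same route as the paper's proof: derive the constraint $\partial_x\!\bigl(\partial^2_{xy}\pi/\partial^2_{xx}\pi\bigr)=0$ from the backward ANM, then substitute the forward mixture log-density $\pi=\xi+\log R$ and rearrange into a linear ODE for $\xi''$. Your organization via $A:=\partial_x\log R$ is slightly cleaner than the paper's fully expanded sums, but the identifications $U=A_y$, $V=A_x$, $G=A_{xy}$, $H=A_{xx}$ make the two arguments line up exactly (and, incidentally, your derivation yields the coefficient $G/U$ in front of $\xi''$, which is what the algebra actually gives).
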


\textit{Sketch of proof.} Since $X$ and $Y$ follow an ANM-MM, their joint density is factorized in the causal direction by $p(X,Y)= \sum_{c=1}^{C} p(Y|X,\theta_c) p_{X}(X) p_{\theta}(\theta_c) 
= p_{X}(X) \sum_{c=1}^{C} a_c p_{\epsilon}(Y-f(X; \theta_{c}))$. If there exists a backward ANM in the anti-causal direction, i.e. $X=g(Y) + \tilde{\epsilon}$, then $p(X,Y) = p_{\tilde{\epsilon}}(X-g(Y))p_{Y}(Y)$ and $\frac{\partial}{\partial X} \left( \frac{\partial^2 \pi / \partial X \partial Y}{\partial^2 \pi / \partial X^2}  \right) = 0$ holds, where $\pi = \log \left[ p_{\tilde{\epsilon}}(X-g(Y))p_{Y}(Y) \right]$, in the backward ANM. Since $p(X,Y)$ should be the same, by substituting $p(X,Y)= p_{X}(X) \sum_{c=1}^{C} a_c p_{\epsilon}(Y-f(X; \theta_{c}))$ into $\frac{\partial}{\partial X} \left( \frac{\partial^2 \pi / \partial X \partial Y}{\partial^2 \pi / \partial X^2}  \right) = 0$, the condition shown in \eqref{Eq:ode} is obtained.

The proof of lemma \ref{Thm:no_backward_anm} follows the idea of the identifiability of ANM in \citep{hoyer2009nonlinear} and is provided in the supplementary. Since the condition that one backward ANM exists for an forward ANM-MM (mixture of ANMs) is more restrictive than that for a single forward ANM, which is the identifiability in \citep{hoyer2009nonlinear}, lemma \ref{Thm:no_backward_anm} indicates that a backward ANM is unlikely to exist in the anticausal direction if 1) $X$ and $Y$ follow an ANM-MM; 2) postulate \ref{indp_post} holds. Based on lemma \ref{Thm:no_backward_anm}, it is reasonable to hypothesize that a stronger result, which is justified in theorem \ref{Thm:no_backward_anm_mm}, is valid, i.e. if the g.m. follows an ANM-MM, then it is almost impossible to have a backward ANM-MM in the anticausal direction.

\begin{thrm}\label{Thm:no_backward_anm_mm}
Let $X \to Y$ and they follow an ANM-MM. If there exists a backward ANM-MM,
\[
X = g(Y;\omega) + \tilde{\epsilon},
\]
where $\omega\sim p_{\omega}(\omega)=\sum_{\tilde{c}=1}^{\tilde{C}} b_{{\tilde{c}}} \mathbf{1}_{\omega_{\tilde{c}}}(\cdot)$, $b_{\tilde{c}}>0$, $\sum_{\tilde{c}=1}^{\tilde{C}} b_{\tilde{c}} =1$ and $\tilde{\epsilon} \perp\!\!\!\perp Y$, in the anticausal direction, then ($p_{X}$, $p_{\epsilon}$, $f$, $p_{\theta}$) should fulfill $\tilde{C}$ ordinary differential equations similar to \eqref{Eq:ode}, i.e.,
\begin{equation}\label{Eq:ode_t}
\xi''' - \frac{G^{(\tilde{c})}(X,Y)}{H^{(\tilde{c})}(X,Y)} \xi'' = \frac{G^{(\tilde{c})}(X,Y)V^{(\tilde{c})}(X,Y)}{U^{(\tilde{c})}(X,Y)} - H^{(\tilde{c})}(X,Y), ~  \tilde{c} = 1,2,\cdots,\tilde{C}, 
\end{equation}
where $\xi := \log p_{X}$, $G^{(\tilde{c})}(X,Y)$, $H^{(\tilde{c})}(X,Y)$, $U^{(\tilde{c})}(X,Y)$ and $V^{(\tilde{c})}(X,Y)$ are defined similarly to those in lemma \ref{Thm:no_backward_anm}.
\end{thrm}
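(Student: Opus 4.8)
\textit{Proof strategy.} The plan is to derive Theorem \ref{Thm:no_backward_anm_mm} from $\tilde{C}$ separate applications of Lemma \ref{Thm:no_backward_anm}, one per backward mechanism $g(\cdot\,;\omega_{\tilde{c}})$. As in the sketch of Lemma \ref{Thm:no_backward_anm}, I would first write the joint density in the causal direction, $p(X,Y)=p_{X}(X)\sum_{c=1}^{C}a_{c}\,p_{\epsilon}\!\left(Y-f(X;\theta_{c})\right)$, and in the anticausal direction, $p(X,Y)=p_{Y}(Y)\sum_{\tilde{c}=1}^{\tilde{C}}b_{\tilde{c}}\,p_{\tilde{\epsilon}}\!\left(X-g(Y;\omega_{\tilde{c}})\right)$, and equate the two. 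The first step is to make precise the ``many-to-one correspondence'' mentioned above: that the existence of a backward ANM-MM aligns the two finite mixtures along a common refinement, i.e.\ there is a partition $\{S_{\tilde{c}}\}_{\tilde{c}=1}^{\tilde{C}}$ of $\{1,\dots,C\}$ for which each backward component reproduces exactly the forward components in $S_{\tilde{c}}$,
\[
b_{\tilde{c}}\,p_{Y}(Y)\,p_{\tilde{\epsilon}}\!\left(X-g(Y;\omega_{\tilde{c}})\right)=p_{X}(X)\sum_{c\in S_{\tilde{c}}}a_{c}\,p_{\epsilon}\!\left(Y-f(X;\theta_{c})\right),\qquad \tilde{c}=1,\dots,\tilde{C}.
\]
Equivalently, conditioning on the backward latent $\omega=\omega_{\tilde{c}}$ isolates a sub-population whose joint law is at once (i) a forward ANM-MM built from the same $f$ and $p_{\epsilon}$ and the same cause distribution, but with $\theta$ restricted to $\{\theta_{c}\}_{c\in S_{\tilde{c}}}$, and (ii) a single backward ANM $X=g(Y;\omega_{\tilde{c}})+\tilde{\epsilon}$.

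The second step is the reduction itself. Since Postulate \ref{indp_post} gives $\theta\perp\!\!\!\perp X$, the cause marginal of each sub-population is still proportional to $p_{X}$, so the conditional joint law above satisfies exactly the hypotheses of Lemma \ref{Thm:no_backward_anm}, with the forward mixture replaced by its $\tilde{c}$-th sub-mixture and $g$ replaced by $g(\cdot\,;\omega_{\tilde{c}})$. Applying Lemma \ref{Thm:no_backward_anm} verbatim to this sub-population yields an equation of the form \eqref{Eq:ode} in $\xi:=\log p_{X}$, whose coefficient functions are, by construction, exactly the quantities $G^{(\tilde{c})},H^{(\tilde{c})},U^{(\tilde{c})},V^{(\tilde{c})}$ of \eqref{Eq:ode_t}, namely the Lemma \ref{Thm:no_backward_anm} coefficients assembled from $g(\cdot\,;\omega_{\tilde{c}})$, $f$, $p_{\epsilon}$ and the $\tilde{c}$-th sub-mixture. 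Letting $\tilde{c}$ range over $1,\dots,\tilde{C}$ produces the whole system \eqref{Eq:ode_t}; all $\tilde{C}$ equations share the unknown $\xi=\log p_{X}$ because the cause distribution is common to every sub-population. In particular $\tilde{C}=1$ recovers Lemma \ref{Thm:no_backward_anm}, and a genuine backward ANM-MM must meet $\tilde{C}$ such constraints simultaneously, which is the strengthening asserted by the theorem.

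The main obstacle is the first step, establishing the common-refinement/partition structure. Because the mixture components may have overlapping supports one cannot read off which backward component produced a given $(x,y)$, so the argument must be phrased on the latent-augmented joint $p(X,Y,\theta,\omega)$ rather than on observable regions. The clean route, natural in the ``uncontrollable external factor'' scenario motivating ANM-MM, is to take the backward index $\omega$ to be a deterministic coarsening of $\theta$, which makes $\omega\perp\!\!\!\perp X$ and the partition $\{S_{\tilde{c}}\}$ literal; absent such an assumption one would need an additional separation condition on the mixture components (or a real-analyticity continuation argument) to transfer the single-backward-ANM identity from a region where one backward component is active to the whole domain. A secondary, routine obstacle is verifying the regularity hypotheses of Lemma \ref{Thm:no_backward_anm} — strict positivity and sufficient smoothness of $p_{X}$, $p_{\epsilon}$, the sub-mixture densities and $g(\cdot\,;\omega_{\tilde{c}})$ — for each sub-population; once these hold, the rest is the substitution computation already carried out in the proof of Lemma \ref{Thm:no_backward_anm}.
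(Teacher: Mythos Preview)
Your proposal follows essentially the same approach as the paper: partition according to the backward latent $\omega$, observe that each block carries a forward ANM-MM (built from a sub-mixture of the $\theta_c$'s with the same $p_X$) together with a single backward ANM $X=g(Y;\omega_{\tilde c})+\tilde\epsilon$, and apply Lemma~\ref{Thm:no_backward_anm} blockwise to obtain the $\tilde C$ ODEs. The paper's proof is in fact briefer than yours---it simply asserts the existence of the data partition $\bm{\mathcal D}=\bm{\mathcal D}_1\cup\cdots\cup\bm{\mathcal D}_{\tilde C}$ and the sub-mixture structure $\Theta^{(\tilde c)}\subseteq\Theta$ without the justification you flag as the ``main obstacle,'' so your concerns about the common-refinement step (and the coarsening assumption on $\omega$) are well placed but go beyond what the paper itself supplies.
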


\begin{proof}
Assume that there exists ANM-MM in both directions. Then there exists a non overlapping partition of the entire data $\bm{\mathcal{D}}\coloneqq \lbrace (x_{n},y_{n})\rbrace_{n=1}^{N} = \bm{\mathcal{D}}_{1} \cup \cdots \cup \bm{\mathcal{D}}_{\tilde{C}}$ such that in each data block $\bm{\mathcal{D}}_{\tilde{c}}$, there is an ANM-MM in the causal direction $Y = f(X;\theta) + \epsilon$, where $\theta \sim p^{(\tilde{c})}_{\theta}(\theta)$ is a discrete distribution on a finite set $\Theta^{(\tilde{c})} \subseteq \Theta$, and an ANM in the anti-causal direction $X = g(Y;\omega = \omega_{\tilde{c}}) + \tilde{\epsilon}$. According to lemma \ref{Thm:no_backward_anm}, for each data block, to ensure the existence of an ANM-MM in the causal direction and an ANM in the anti-causal direction, ($p_{X}$, $p_{\epsilon}$, $f$, $p_{\theta}$) should fulfill an ordinary differential equation in the form of \eqref{Eq:ode}. Then the existence of backward ANM-MM requires $\tilde{C}$ ordinary differential equations to be fulfilled simultaneously which yields \eqref{Eq:ode_t}.
\end{proof}

Then the causal direction in ANM-MM can be inferred by investigating the independence between the hypothetical cause and the corresponding function parameter. According to theorem \ref{Thm:no_backward_anm_mm}, if they are independent in the causal direction, then it is highly likely they are dependent in the anticausal direction. Therefore in practice, the inferred direction is the one that shows more evidence of independence between them.

\subsection{Mechanism clustering of ANM-MM}
In ANM-MM, $\theta$, which represents function parameters, can be directly used to identify different g.m.s since each parameter value corresponds to one mechanism. In other words, observations generated by the same g.m. would have the same $\theta$ if the imposed statistical model is identifiable with respect to $\theta$. 

Denote the parameter associated with each observation $(x_{n}, y_{n})$ by $\bm{\theta}_{n}$, we suppose a more practical inherent clustering structure behind hidden ${\bm \theta}_n$. Formally, there is a grouping indicator of integers ${\mathbf z}\in\{1,\ldots, C\}^N$ that assign each ${\bm\theta}_n$ to one of the $C$ clusters, through the $n$th element of $\mathbf z$, e.g. ${\bm \theta}_n$ belongs to cluster $c$ if $[\mathbf{z}]_n=c, c\in \{1,\ldots, C\}.$ Following ANM-MM, we may assume each ${\bm \theta}_n$ belong to one of $C$ components and each component follows $\mathcal{N}(\mu_{c}, \sigma^2)$. A likelihood-based clustering scheme suggests minimizing $-\ell$ jointly with respect to all means and $\mathbf z$ 
\begin{eqnarray*}
	\ell(\bm{\mathcal{M}}, {\mathbf z}) = \log  \prod_{n=1}^N \prod_{c=1}^C \left\{ \frac{1}{\sqrt{2\pi}\sigma} \exp\left( -\frac{1}{2\sigma^2} ({\bm \theta}_n - \mu_c)^2 \right) \right\}^{\mathbf{1}([\mathbf{z}]_{n} = c)},
	\label{eq:clustloglike}
\end{eqnarray*}
where $\bm{\mathcal{M}} = \{ \mu_{c}\}_{c=1}^{C}$ and $\mathbf{1}(\cdot)$ is the indicator function. To simplify further let's ignore the known $\sigma^2$ and minimize $-\ell$ using coordinate descent iteratively 
\begin{eqnarray}
\hat {\bm{\mathcal{M}}} \mid {\mathbf z} &=&  \argmin_{\bm{\mathcal{M}}} \sum_{c=1}^C \sum_{\{n \mid [\mathbf{z}]_n=c\}} ({\bm \theta}_n -\mu_c)^2 \label{eq:km_center}\\
\hat {\mathbf z}\mid {\bm{\mathcal{M}}} &=&    \argmin_{\mathbf z} \sum_{c=1}^C \sum_{\{n \mid [\mathbf{z}]_n=c\}} ({\bm \theta}_n -\mu_c)^2 \label{eq:km_group}.
\end{eqnarray}
The minimizer of \eqref{eq:km_center} is the mean 
$\hat{\mu_c} = \frac{1}{n_c} \sum_{\{n \mid [\mathbf{z}]_n=c\}} {\bm \theta_n},$ 
where $n_c$ is the size of the $c$th cluster $n_c = \sum_{n=1}^N \mathbf{1}([\mathbf{z}]_n=c).$ The minimizer of \eqref{eq:km_group} is group assignment through minimum Euclidean distance. Therefore, iterating between \eqref{eq:km_center} and \eqref{eq:km_group} coincides with applying $k$-means algorithm on all ${\bm \theta}_n$ and the goal of finding clusters consistent with the g.m.s for data from ANM-MM can be achieved by firstly estimating parameters associated with each observation and then conducting $k$-means directly on parameters. 

\section{ANM-MM Estimation by GPPOM}
We propose Gaussian process partially observable model (GPPOM) and incorporate Hilbert-Schmidt independence criterion (HSIC) \citep{gretton2005measuring} enforcement into GPPOM to estimate the model parameter $\theta$. Then we summarize algorithms for causal inference and mechanism clustering of ANM-MM.

\subsection{Preliminaries}
\textbf{Dual PPCA.} Dual PPCA \citep{lawrence2004gaussian} is a latent variable model in which maximum likelihood solution for the latent variables is found by marginalizing out the parameters. Given a set of $N$ centered $D$-dimensional data $\mathbf{Y}=\left[\bm{y}_{1}, \dots, \bm{y}_{N}\right]^{T}$, dual PPCA learns the $q$-dimensional latent representation $\bm{x}_{n}$ associated with each observation $\bm{y}_{n}$. The relation between $\bm{x}_{n}$ and $\bm{y}_{n}$ in dual PPCA is $\bm{y}_{n} = \mathbf{W}\bm{x}_{n} + \bm{\epsilon}_{n}$, where the matrix $\mathbf{W}$ specifies the linear relation between $\bm{y}_{n}$ and $\bm{x}_{n}$ and noise $\bm{\epsilon}_{n} \sim\mathcal{N}(\mathbf{0}, \beta^{-1}\mathbf{I})$. Then by placing a standard Gaussian prior on each row of $\mathbf{W}$, one obtains the marginal likelihood of all observations and the objective function of dual PPCA is the log-likelihood $\mathcal{L} = -\frac{DN}{2}\ln(2\pi) - \frac{D}{2}\ln\left( |\mathbf{K}| \right) - \frac{1}{2} \tr\left( \mathbf{K}^{-1} \mathbf{YY}^{T} \right)$, where $\mathbf{K}=\mathbf{XX}^{T}+\beta^{-1}\mathbf{I}$ and $\mathbf{X}=\left[\bm{x}_{1}, \dots, \bm{x}_{N}\right]^{T}$.

\textbf{GP-LVM.} GP-LVM \citep{lawrence2005probabilistic} generalizes dual PPCA to cases of nonlinear relation between $\bm{y}_{n}$ and $\bm{x}_{n}$ by mapping latent representations in $\mathbf{X}$ to a feature space, i.e. $\bm{\Phi} = \left[\phi(\bm{x}_{1}), \dots, \phi(\bm{x}_{N})\right]^{T}$, where $\phi(\cdot)$ denotes the canonical feature map. Then $\mathbf{K} = \bm{\Phi} \bm{\Phi}^{T} + \beta^{-1}\mathbf{I}$ and $\bm{\Phi} \bm{\Phi}^{T}$ can be computed using kernel trick. GP-LVM can also be interpreted as a new class of models which consists of $D$ independent Gaussian processes \citep{williams1998prediction} mapping from a latent space to an observed data space \citep{lawrence2005probabilistic}.

\textbf{HSIC.}
HSIC \citep{gretton2005measuring}, which is based on reproducing kernel Hilbert space (RKHS) theory, is widely used to measure the dependence between r.v.s. Let $\bm{\mathcal{D}}\coloneqq \lbrace (\bm{x}_{n}, \bm{y}_{n}) \rbrace_{n=1}^{N}$ be a sample of size $N$ draw independently and identically distributed according to $P(X,Y)$, HSIC answers the query whether $X \perp\!\!\!\perp Y$. Formally, denote by $\mathcal{F}$ and $\mathcal{G}$ RKHSs with universal kernel $k$, $l$ on the compact domains $\mathcal{X}$ and $\mathcal{Y}$, HSIC is the measure defined as $\text{HSIC}(P(X,Y), \mathcal{F}, \mathcal{G}) \coloneqq \Vert \mathcal{C}_{xy} \Vert^{2}_{\text{HS}}$, which is essentially the squared Hilbert Schmidt norm \citep{gretton2005measuring} of the cross-covariance operator $\mathcal{C}_{xy}$ from RKHS $\mathcal{G}$ to $\mathcal{F}$ \citep{fukumizu2004dimensionality}. It is proved in \citep{gretton2005measuring} that, under conditions specified in \citep{gretton2005kernel}, $\text{HSIC}(P(X,Y), \mathcal{F}, \mathcal{G})= 0$ if and only if $X \perp\!\!\!\perp Y$. In practice, a biased empirical estimator of HSIC based on the sample $\bm{\mathcal{D}}$ is often adopted:
\begin{align}
\text{HSIC}_{b}(\bm{\mathcal{D}}) = \frac{1}{N^{2}} \tr\left(\mathbf{KHLH} \right),
\label{eq:emp_hsic}
\end{align}
where $\left[\mathbf{K} \right]_{ij} = k(\bm{x}_{i}, \bm{x}_{j})$, $\left[\mathbf{L} \right]_{ij} = l(\bm{y}_{i}, \bm{y}_{j})$, $\mathbf{H} = \mathbf{I} - \frac{1}{N}\vec{\mathbf{1}}\vec{\mathbf{1}}^{T}$, and $\vec{\mathbf{1}}$ is a $N \times 1$ vector of ones.

\subsection{Gaussian process partially observable model}
\textbf{Partially observable dual PPCA.} Dual PPCA is not directly applicable to model ANM-MM since: 1) part of the r.v. that maps to the effect is visible (i.e. $X$); 2) the relation (i.e. $f$) is nonlinear; 3) r.v.s that contribute to the effect should be independent ($X \perp\!\!\!\perp \theta$) in ANM-MM. To tackle 1), a latent r.v. $\theta$ is brought in dual PPCA.

Denote the observed effect by $\mathbf{Y}=\left[\bm{y}_{1}, \dots, \bm{y}_{N}\right]^{T}$, observed cause by $\mathbf{X}=\left[\bm{x}_{1}, \dots, \bm{x}_{N}\right]^{T}$, the matrix collecting function parameters associated with each observation by $\bm{\Theta}=\left[\bm{\theta}_{1}, \dots, \bm{\theta}_{N}\right]^{T}$ and the r.v. that contribute to the effect by $\tilde{X} = [X, \theta]$. Similar to dual PPCA, the relation between the latent representation and the observation is given by
\begin{align}
	\bm{y}_{n} = \tilde{\mathbf{W}} \tilde{\bm{x}}_{n} + \bm{\epsilon}_{n}, \quad n=1,\dots,N \nonumber
\end{align}
where $\tilde{\bm{x}}_{n} = \left[ \bm{x}^{T}_{n}, \bm{\theta}^{T}_{n}\right]^{T}$, $\tilde{\mathbf{W}}$ is the matrix specifies the relation between $\bm{y}_{n}$ and $\tilde{\bm{x}}_{n}$, $\bm{\epsilon}_{n}\sim \mathcal{N}(\mathbf{0}, \beta^{-1}\mathbf{I})$ is the additive noise. Then by placing a standard Gaussian prior on $\tilde{\mathbf{W}}$, i.e. $p(\tilde{\mathbf{W}})=\prod_{i=1}^{D}\mathcal{N} (\tilde{\mathbf{w}}_{i,:}|\mathbf{0}, \mathbf{I})$, where $\tilde{\mathbf{w}}_{i,:}$ is the $i$th row of the matrix $\tilde{\mathbf{W}}$, the log-likelihood of the observations is given by
\begin{align}
	\mathcal{L}(\bm{\Theta} |\mathbf{X}, \mathbf{Y}, \beta) = -\frac{DN}{2}\ln(2\pi) - \frac{D}{2}\ln\left( |\mathbf{\tilde{K}}| \right) - \frac{1}{2} \tr\left( \mathbf{\tilde{K}}^{-1}\mathbf{YY}^{T} \right),
\label{our_lik}
\end{align}
where $\mathbf{\tilde{K}} = \mathbf{\tilde{X}}\mathbf{\tilde{X}}^{T} + \bm{\beta}^{-1}\mathbf{I} = \left[ \mathbf{X}, \bm{\Theta} \right] \left[ \mathbf{X}, \bm{\Theta} \right]^{T} + \bm{\beta}^{-1}\mathbf{I} = \mathbf{XX^{T}} + \mathbf{\Theta\Theta^{T}} + \bm{\beta}^{-1}\mathbf{I}$ is the covariance matrix after bringing in $\theta$.

\begin{wrapfigure}[20]{R}{0.5\textwidth} %<-- Wrapfigure covers 6 lines
	%\vspace{-4mm}
	\IncMargin{1.5em}
	\begin{algorithm}[H]                %<-- Remove float environment
		\SetKwData{Left}{left}
		\SetKwData{This}{this}
		\SetKwData{Up}{up}
		\SetKwFunction{Union}{Union}
		\SetKwFunction{FindCompress}{FindCompress}
		\SetKwInOut{Input}{input}
		\SetKwInOut{Output}{output}
		\Input{$\bm{\mathcal{D}} = \{(\bm{x}_{n}, \bm{y}_{n})\}_{n=1}^{N}$ - the set of observations of two r.v.s;\\ $\lambda$ - parameter of independence}
		\Output{The causal direction}
		\BlankLine
		
		Standardize observations of each r.v.\;
		Initialize $\beta$ and kernel parameters\;
		Optimize \eqref{eq:opt_obj} in both directions, denote the the value of HSIC term by $\text{HSIC}_{X\to Y}$ and $\text{HSIC}_{Y\to X}$, respectively\;
		\uIf{$\text{HSIC}_{X\to Y}$ < $\text{HSIC}_{Y\to X}$}{The causal direction is $X\to Y$\;}
		\uElseIf{$\text{HSIC}_{X\to Y}$ > $\text{HSIC}_{Y\to X}$}{The causal direction is $Y\to X$\;}
		\Else{No decision made.}
		
		\caption{Causal Inference}\label{alg_cd}
	\end{algorithm}
	\DecMargin{1.5em}
\end{wrapfigure}

\textbf{General nonlinear cases (GPPOM).} Similar to the generalization from dual PPCA to GP-LVM, the dual PPCA with observable $X$ and latent $\theta$ can be easily generalized to nonlinear cases. Denote the feature map by $\phi(\cdot)$ and $\bm{\Phi} = \left[\phi(\tilde{\bm{x}}_{1}), \dots, \phi(\tilde{\bm{x}}_{N})\right]^{T}$, then the covariance matrix is given by $\tilde{\mathbf{K}} = \bm{\Phi} \bm{\Phi}^{T} + \bm{\beta}^{-1}\mathbf{I}$. The entries of $\bm{\Phi} \bm{\Phi}^{T}$ can be computed using kernel trick given a selected kernel $k(\cdot,\cdot)$. In this paper, we adopt the radial basis function (RBF) kernel, which reads $k(\bm{x}_{i}, \bm{x}_{j}) = \exp\left( - \sum_{d=1}^{D_{x}} \gamma_{d} ( \bm{x}_{id} - \bm{x}_{jd})^{2} \right)$, where $\gamma_{d}, \text{for}~ d=1,\dots, D_{x}$, are free parameters and $D_{x}$ is the dimension of the input. As a result of adopting RBF kernel, the covariance matrix $\tilde{\mathbf{K}}$ in \eqref{our_lik} can be computed as
\begin{align}
\tilde{\mathbf{K}}= \bm{\Phi} \bm{\Phi}^{T} + \bm{\beta}^{-1}\mathbf{I} = \mathbf{K}_{X} \circ \mathbf{K}_{\theta} + \bm{\beta}^{-1}\mathbf{I}, \nonumber
\end{align}
where $\circ$ denotes the Hadamard product, the entries on $i$th row and $j$th column of $\mathbf{K}_{X}$ and $\mathbf{K}_{\theta}$ are given by $\left[\mathbf{K}_{X}\right]_{ij}=k(\bm{x}_{i}, \bm{x}_{j})$ and $\left[\mathbf{K}_{\theta}\right]_{ij}=k(\bm{\theta}_{i}, \bm{\theta}_{j})$, respectively. After the nonlinear generalization, the relation between $Y$ and $\tilde{X}$ reads $Y = f(\tilde{X}) + \epsilon = f(X, \theta) + \epsilon$. This variant of GP-LVM with partially observable latent space is named GPPOM in this paper. Like GP-LVM, $\tilde{X}$ is mapped to $Y$ by the same set of Gaussian processes in GPPOM so the differences in the g.m.s is captured by $\bm{\theta}_{n}$, the latent representations associated with each observation.

\subsection{Model estimation by independence enforcement}
Both dual PPCA and GP-LVM finds the latent representations through log-likelihood maximization using scaled conjugate gradient \citep{moller1993scaled}. However, the $\bm{\theta}$ can not be found by directly conducting likelihood maximization since the ANM-MM requires additionally the independence between $X$ and $\bm{\theta}$. To this end, we include HSIC \citep{gretton2005measuring} in the objective. By incorporating HSIC term into the negative log-likelihood of GPPOM, the optimization objective reads
\begin{align}
\argmin_{ \bm{\Theta}, \Omega }{ \mathcal{J}(\bm{\Theta}) } = \argmin_{ \bm{\Theta}, \Omega }{ \left[- \mathcal{L}(\bm{\Theta} |\mathbf{X}, \mathbf{Y}, \Omega) + \lambda \log \text{HSIC}_{\text{b}}(\mathbf{X}, \bm{\Theta})\right] },
\label{eq:opt_obj}
\end{align}
where $\lambda$ is the parameter which controls the importance of the HSIC term and $\Omega$ is the set of all hyper parameters including $\beta$ and all kernel parameters $\gamma_{d}$, $d = 1, \dots, D_{x}$. 

To find $\bm{\Theta}$, we resort to the gradient descant methods. The gradient of the objective $\mathcal{J}$ with respect to latent points in $\bm{\Theta}$ is given by 
\begin{align}
\frac{\partial \mathcal{J}}{\partial \left[\bm{\Theta}\right]_{ij}} = \tr\left[ \left( \frac{\partial \mathcal{J}}{\partial \mathbf{K}_{\theta}} \right)^{T} \frac{\partial \mathbf{K}_{\theta}}{\partial \left[\bm{\Theta}\right]_{ij}} \right].
\label{eq:grad}
\end{align}
The first part on the right hand side of \eqref{eq:grad}, which is the gradient of $\mathcal{J}$ with respect to the kernel matrix $\mathbf{K}_{\theta}$, can be computed as
\begin{align}
\frac{\partial \mathcal{J}}{\partial \mathbf{K}_{\theta}} = - \tr \left[ \left(\mathbf{\tilde{K}}^{-1}\mathbf{YY}^{T}\mathbf{\tilde{K}}^{-1} - D\mathbf{\tilde{K}}^{-1}\right)^{T} \left( \mathbf{K}_{X} \circ \mathbf{J}^{ij} \right) \right] + \lambda \frac{1}{\tr\left(\mathbf{K}_{X}\mathbf{HK}_{\theta}\mathbf{H} \right))} \mathbf{HK}_{X}\mathbf{H},
\label{grad_wrt_kz}
\end{align}
where $\mathbf{J}^{ij}$ is the single-entry matrix, 1 at $(i,j)$ and 0 elsewhere and $\mathbf{H} = \mathbf{I} - \frac{1}{N}\vec{\mathbf{1}}\vec{\mathbf{1}}^{T}$. Combining $\frac{\partial \mathcal{L}}{\partial \mathbf{K}_{\theta}}$ with $\frac{\partial \mathbf{K}_{\theta}}{\partial \left[\bm{\Theta}\right]_{ij}}$, whose entry on the $m$th row and $n$th column is given by $\frac{\partial \left[\mathbf{K}_{\Theta}\right]_{mn}}{\partial \left[\bm{\Theta}\right]_{ij}} = \frac{\partial k(\bm{\theta}_{m}, \bm{\theta}_{n})}{\partial \left[\bm{\Theta}\right]_{ij}}$, through the chain rule, all latent points in $\bm{\Theta}$ can be optimized. With $\bm{\Theta}$, one can conduct causal inference and mechanism clustering of ANM-MM. The detailed steps are given in Algorithm \ref{alg_cd} and \ref{alg_clu}.

\begin{wrapfigure}[12]{R}{0.5\textwidth} %<-- Wrapfigure covers 6 lines
% 	\vspace{-5mm}
	\IncMargin{1.5em}
	\begin{algorithm}[H]                %<-- Remove float environment
		\SetKwData{Left}{left}
		\SetKwData{This}{this}
		\SetKwData{Up}{up}
		\SetKwFunction{Union}{Union}
		\SetKwFunction{FindCompress}{FindCompress}
		\SetKwInOut{Input}{input}
		\SetKwInOut{Output}{output}
		\Input{$\bm{\mathcal{D}} = \{(\bm{x}_{n}, \bm{y}_{n})\}_{n=1}^{N}$ - the set of observations of two r.v.s;\\ $\lambda$ - parameter of independence; \\$C$ - Number of clusters}
		\Output{The cluster labels}
		\BlankLine
		
		Standardize observations of each r.v.\;
		Initialize $\beta$ and kernel parameters\;
		Find $\bm{\Theta}$ by optimizing \eqref{eq:opt_obj} in causal direction\;
		Apply $k$-means on $\bm{\theta}_{n}$, $n=1,\dots, N$\;
		\Return the cluster labels.
		\caption{Mechanism clustering}\label{alg_clu}
	\end{algorithm}
	\DecMargin{1.5em}
\end{wrapfigure}

\section{Experiments}
In this section, experimental results on both synthetic and real data are given to show the performance of ANM-MM on causal inference and mechanism clustering tasks. The Python code of ANM-MM is available online at \url{https://github.com/amber0309/ANM-MM}.

\begin{figure}[t]
	\centering
	\includegraphics[width=\linewidth, height=4cm]{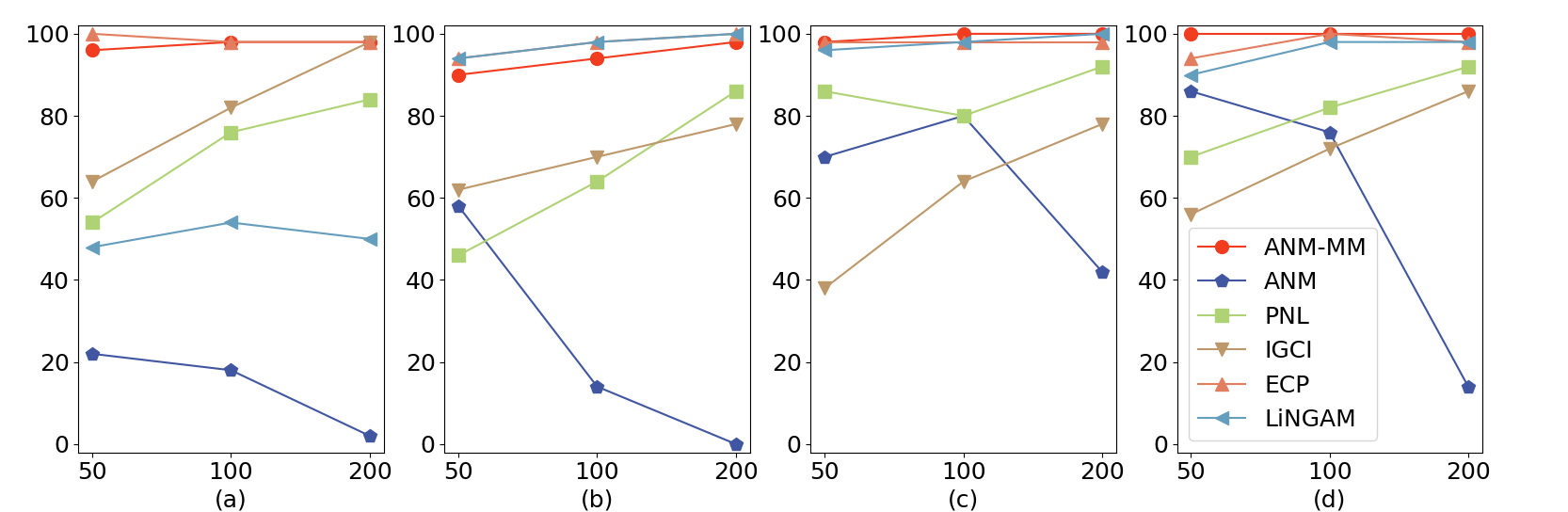}
	\caption{Accuracy ($y$-axis) versus sample size ($x$-axis) on $Y=f(X; \theta_{c}) + \epsilon$ with different mechanisms. (a) $f_{1}$, (b) $f_{2}$, (c) $f_{3}$, (d) $f_{4}$.}
	\label{Fig:cd1}
\end{figure}

\subsection{Synthetic data}
In experiments of causal inference, ANM-MM is compared with ANM \citep{hoyer2009nonlinear}, PNL \citep{zhang2009identifiability}, IGCI \citep{janzing2012information}, ECP \cite{zhang2015discovery} and LiNGAM \citep{shimizu2006linear}. The results are evaluated using accuracy, which is the percentage of correct causal direction estimation of 50 independent experiments. Note that ANM-MM was applied using different parameter $\lambda \in \{0.001, 0.01, 0.1, 1, 10\}$ and IGCI was applied using different reference measures and estimators. Their highest accuracy is reported. 

In experiments of clustering, ANM-MM is compared with well-known $k$-means \citep{macqueen1967some} (similarity-based) on both raw data ($k$-means) and its PCA component (PCA-$k$m), Gaussian mixture clustering (GMM) \citep{rasmussen2000infinite} (model-based), spectral clustering (SpeClu) \citep{shi2000normalized} (spectral graph theory-based) and DBSCAN \citep{ester1996density} (density-based). Clustering performance is evaluated using average adjusted Rand index \citep{hubert1985comparing} (avgARI), which is the mean ARI over 100 experiments. High ARI ($\in [-1, 1]$) indicates good match between the clustering results and the ground truth. Sample size ($N$) is 100 in all synthetic clustering experiments. Clustering results are visualized in the supplementary\footnote{The results of PCA-$k$m are not visualized since they are similar to and worse than those of $k$-means.}.

\textbf{Different g.m.s and sample sizes.} We examine the performance on different g.m.s ($f$) and sample sizes ($N$). The mechanisms adopted are the following elementary functions: 1) $f_{1}=\frac{1}{1.5 + \theta_{c} X^{2}}$; 2) $f_{2}=2 \times X ^{\theta_{c} - 0.25} $; 3) $f_{3}= \exp(-\theta_{c} X)$; 4) $f_{4}= \tanh(\theta_{c} X)$. We tested sample size $N =$ 50, 100 and 200 for each mechanism. Given $f$ and $N$, the cause $X$ is sampled from a uniform distribution $U(0,1)$ and then mapped to the effect by $Y=f(X; \theta_{c}) + \epsilon, c\in \{1,2\}$, where $\theta_{1}\sim U(1,1.1)$, $\theta_{2}\sim U(3,3.1)$ and $\epsilon \sim \mathcal{N}(0, 0.05^{2})$. Each mechanism generates half of the observations.

\textit{Causal Inference.} The results are shown in Fig. \ref{Fig:cd1}. ANM-MM and ECP outperforms others based on a single causal model, which is consistent with our anticipation. Compared with ECP, ANM-MM shows slight advantages in 3 out of 4 settings. \textit{Clustering.} The avgARI values are summarized in (i) of Table \ref{tb:clu}. ANM-MM significantly outperforms other approaches in all mechanism settings.

\begin{table}
	\centering
    \caption{avgARI of synthetic clustering experiments}
    \label{tb:clu}
	\begin{tabular}{c|cccc|cc|cc|cc}
		\toprule
		\multirow{2}{*}{avgARI} & \multicolumn{4}{c}{(i) $f$} & \multicolumn{2}{c}{(ii) $C$}  & \multicolumn{2}{c}{(iii) $\sigma$} & \multicolumn{2}{c}{(iv) $a_{1}$} \\ \cline{2-11} \\[-0.65em]
		& $f_{1}$ & $f_{2}$ & $f_{3}$ & $f_{4}$ & $3$ & $4$ & $0.01$ & $0.1$ & $0.25$ & $0.75$ \\
		\midrule
		%\\[-0.65em]
		ANM-MM & \textbf{0.393} & \textbf{0.660} & \textbf{0.777} & \textbf{0.682} & \textbf{0.610} & \textbf{0.447} & \textbf{0.798} & \textbf{0.608} & \textbf{0.604} & \textbf{0.867} \\
		%\\[-0.65em]
		$k$-means & 0.014 & 0.039 & 0.046 & 0.046 & 0.194 & 0.165 & 0.049 & 0.042 & 0.047 & 0.013 \\
		%\\[-0.65em]
		PCA-$k$m & 0.013 & 0.037 & 0.044 & 0.048 & 0.056 & 0.041 & 0.047 & 0.040 & 0.052 & 0.014 \\
		GMM & 0.015 & 0.340 & 0.073 & 0.208 & 0.237 & 0.202 & 0.191 & 0.025 & 0.048 & 0.381 \\
		%\\[-0.65em]
		SpeClu & 0.003 & 0.129 & 0.295 & 0.192 & 0.285 & 0.175 & 0.595 & 0.048 & 0.044 & -0.008\\
		%\\[-0.65em]
		DBSCAN & 0.055 & 0.265 & 0.342 & 0.358 & 0.257 & 0.106 & 0.527 & 0.110 & 0.521 & 0.718 \\
		%\\[-0.7em]
		\bottomrule
	\end{tabular}
\end{table}

\begin{figure}[t]
	\begin{subfigure}{.32\linewidth}
		\centering
		\includegraphics[width=\linewidth]{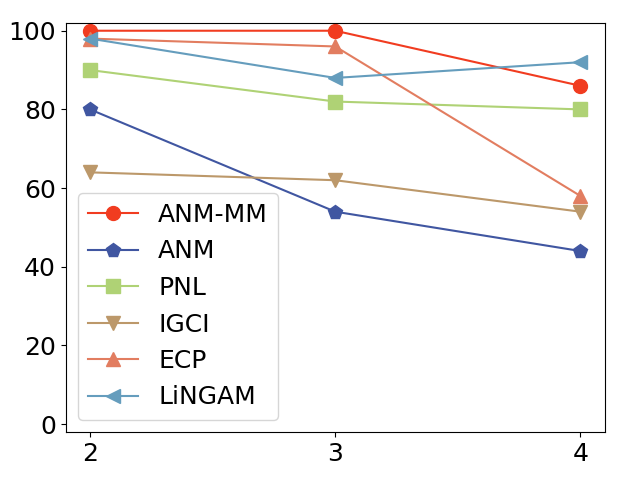}
		\caption{}
		\label{Fig:acc_nmech}
	\end{subfigure}
	\begin{subfigure}{.32\linewidth}
		\centering
		\includegraphics[width=\linewidth]{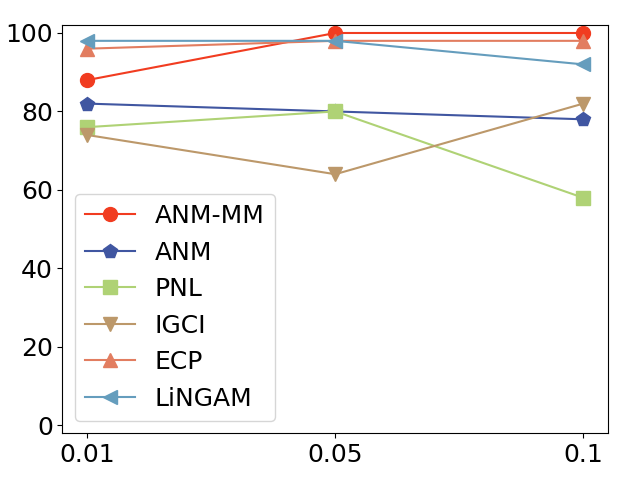}
		\caption{}
		\label{Fig:acc_noise}
	\end{subfigure}
	\begin{subfigure}{.32\linewidth}
		\centering
		\includegraphics[width=\linewidth]{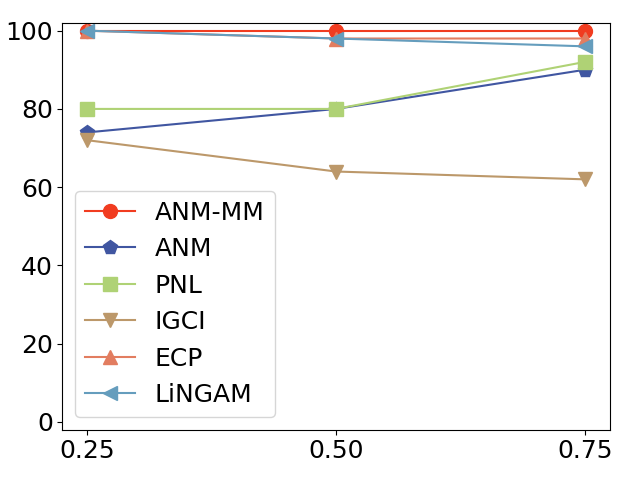}
		\caption{}
		\label{Fig:acc_prop}
	\end{subfigure}
    \caption{Accuracy (y-axis) versus (a) number of mechanisms; (b) noise standard deviation; (c) mixing proportion; on $f_{3}$ with $N=100$.}
    \label{Fig:syn_cd_2}
\end{figure}

\textbf{Different number of g.m.s.}\footnote{From this part on, g.m. is fixed to be $f_{3}$.}
We examine the performance on different number of g.m.s ($C$ in Definition \ref{def:anmmm}). $\theta_{1}$, $\theta_{2}$ and $\epsilon$ are the same as in previous experiments. In the setting of three mechanisms, $\theta_{3}\sim U(0.5, 0.6)$. In the setting of four, $\theta_{3}\sim U(0.5, 0.6)$ and $\theta_{4}\sim U(2, 2.1)$. Again, the numbers of observations from each mechanism are the same.

\textit{Causal Inference.} The results are given in Fig. \ref{Fig:acc_nmech} which shows decreasing trend for all approaches. However, ANM-MM keeps 100\% when the number of mechanisms increases from 2 to 3. \textit{Clustering.} The avgARI values are given in (ii) and (i)$f_3$ of Table \ref{tb:clu}. The performance of different approaches show different trends which is probably due to the clustering principle they are based on. Although ANM-MM is heavily influenced by $C$, its performance is still much better than others.

\begin{wrapfigure}[15]{r}{0.5\textwidth}
\vspace{-5mm}
  \begin{center}
    \includegraphics[width=\linewidth]{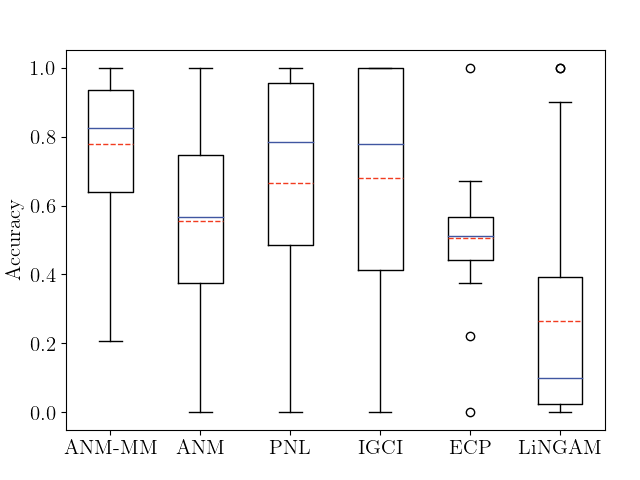}
  \end{center}
  \vspace{-5mm}
  \caption{Accuracy on real cause-effect pairs.}
  \label{fig:real_pairs}
\end{wrapfigure}

\textbf{Different noise standard deviations.} We examine the performance on different noise standard deviations $\sigma$. $\theta_{1}$, $\theta_{2}$ are the same as in the first part of experiments. Three different cases where $\sigma=0.01, 0.05$ and 0.1 are tested.

\textit{Causal Inference.} The results are given in Fig. \ref{Fig:acc_noise}. The change in $\sigma$ in this range does not significantly influence the performance of most causal inference approaches. ANM-MM keeps 100\% accuracy for all choice of $\sigma$. \textit{Clustering.} The avgARI values are given in (iii) and (i)$f_3$ of Table \ref{tb:clu}. As our anticipation, the clustering results heavily rely on $\sigma$ and all approaches show a decreasing trend in avgARI as $\sigma$ increases. However, ANM-MM is the most robust against large $\sigma$.

\textbf{Different mixing proportions.} We examine the performance on different mixing proportions ($a_{c}$ in Definition \ref{def:anmmm}). $\theta_{1}$, $\theta_{2}$ and $\sigma$ are the same as in the first part of experiments. Cases where $a_{1} = 0.25, 0.5$ and 0.75 (corresponding $a_{2} = 0.75, 0.5$ and 0.25) are tested.

\textit{Causal Inference.} The results on different $a_{1}$ are given in Fig. \ref{Fig:acc_prop}. Approaches based on a single causal model are sensitive to the change in $a_{1}$ whereas ECP and ANM-MM are more robust and outperform others. \textit{Clustering.} The avgARI values of experiments on different $a_{1}$ are given in (iv) and (i)$f_3$ of Table \ref{tb:clu}. The results of comparing approaches are significantly affected by $a_{1}$ and ANM-MM shows best robustness against the change in $a_{1}$. 

\subsection{Real data}\label{real:exp}

\textbf{Causal inference on T\"{u}ebingen cause-effect pairs.} We evaluate the causal inference performance of ANM-MM on real world benchmark cause-effect pairs\footnote{https://webdav.tuebingen.mpg.de/cause-effect/.} \citep{mooij2016distinguishing}. Nine out of 41 data sets are excluded in our experiment because either they consists of multivariate or categorical data (pair 47, 52, 53, 54, 55, 70, 71, 101 and 105) or the estimated latent representations are extremely close\footnote{close in the sense that $|\theta_{i} - \theta_{j}|<0.001$.} (pair 12 and 17). Fifty independent experiments are repeated for each pair, and the percentage of correct inference of different approaches are recorded. Then average percentage of pairs from the same data set is computed as the accuracy of the corresponding data set. In each independent experiment, different inference approaches are applied on 90 points randomly sampled from raw data without replacement.

The results are summarized in Fig. \ref{fig:real_pairs} with blue solid line indicating median accuracy and red dashed line indicating mean accuracy. It shows that the performance of ANM-MM is satisfactory, with highest median accuracy of about 82\%. IGCI also performs quite well, especially in terms of median, followed by PNL.
\begin{figure}[t]
	\begin{subfigure}{.32\linewidth}
		\centering
		\includegraphics[width=\linewidth]{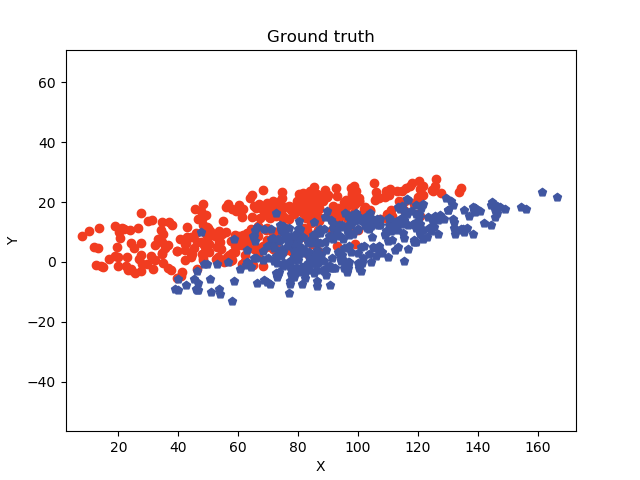}
		\caption{Ground truth}
		\label{Fig:real_1_1}
	\end{subfigure}
	\begin{subfigure}{.32\linewidth}
		\centering
		\includegraphics[width=\linewidth]{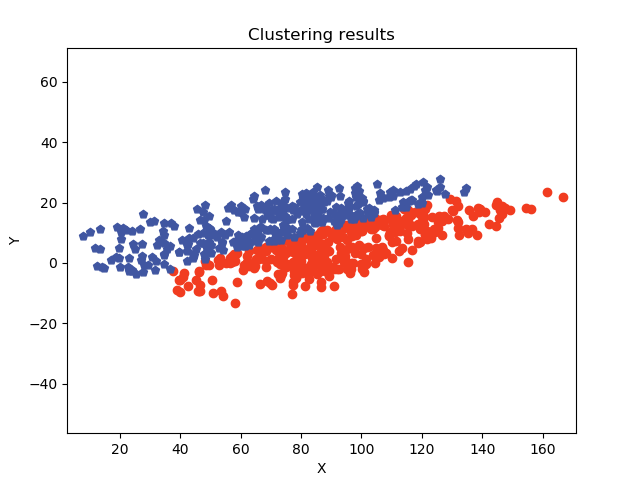}
		\caption{ANM-MM}
		\label{Fig:real_1_2}
	\end{subfigure} 
	\begin{subfigure}{.32\linewidth}
		\centering
		\includegraphics[width=\linewidth]{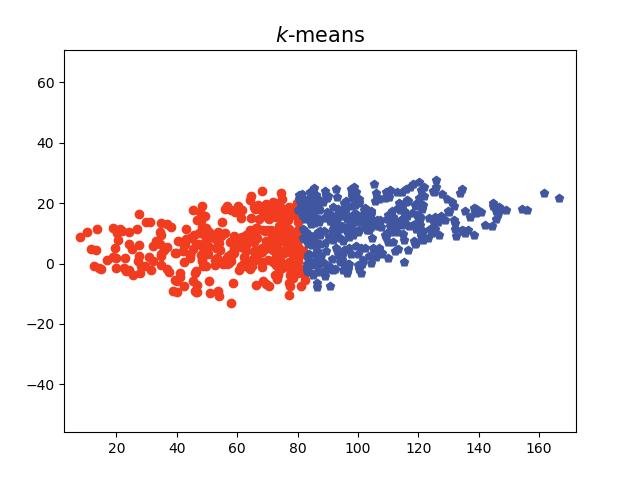}
		\caption{$k$-means}
		\label{Fig:real_1_3}
	\end{subfigure} \\
	\begin{subfigure}{.32\linewidth}
		\centering
		\includegraphics[width=\linewidth]{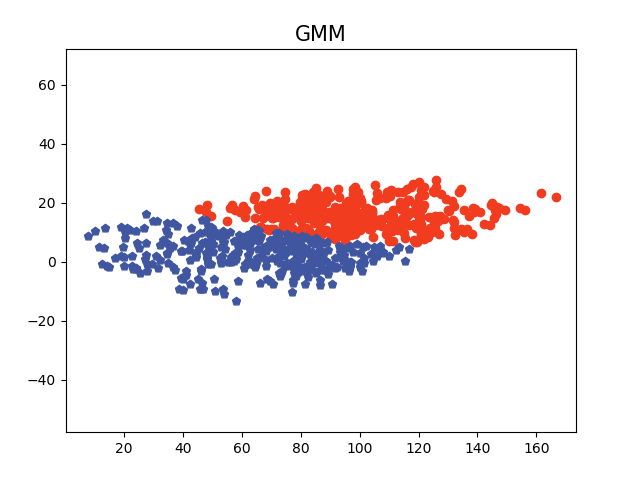}
		\caption{GMM}
		\label{Fig:real_2_1}
	\end{subfigure}
	\begin{subfigure}{.32\linewidth}
		\centering
		\includegraphics[width=\linewidth]{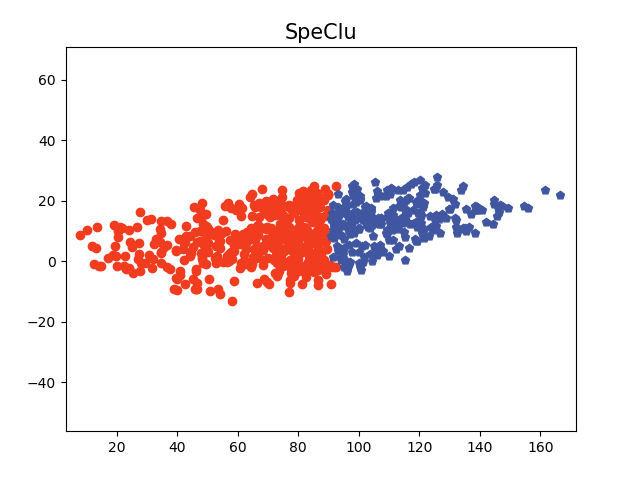}
		\caption{SpeClu}
		\label{Fig:real_2_2}
	\end{subfigure}
    \begin{subfigure}{.32\linewidth}
		\centering
		\includegraphics[width=\linewidth]{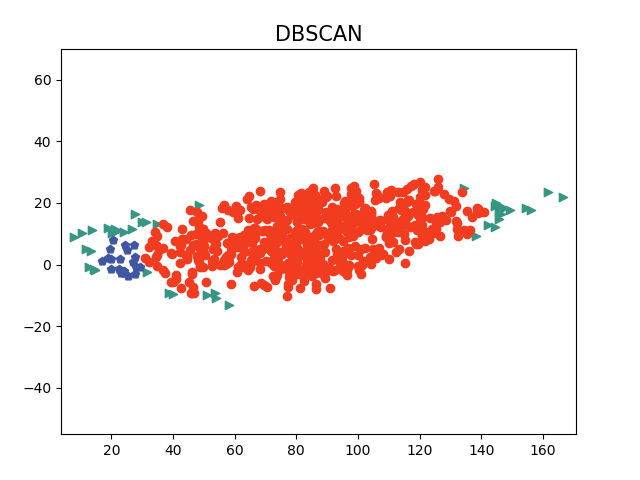}
		\caption{DBSCAN}
		\label{Fig:real_2_3}
	\end{subfigure}
	\caption{Ground truth and clustering results of different approaches on BAFU air data.}
    \label{fig:real_clu1}
\end{figure}

\textbf{Clustering on BAFU air data.} We evaluate the clustering performance of ANM-MM on real air data obtained online\footnote{https://www.bafu.admin.ch/bafu/en/home/topics/air.html}. This data consists of daily mean values of ozone ($\mu g / m^{3}$) and temperature ($^\circ$) of 2009 from two distinct locations in Switzerland. In our experiment, we regard the data as generating from two mechanisms (each corresponds to a location). The clustering results are visualized in Fig. \ref{fig:real_clu1}. The ARI values of ANM-MM is 0.503, whereas $k$-means, GMM, spectral clustering and DBSCAN could only obtain ARI of -0.001, 0.003, 0.078 and 0.003, respectively. ANM-MM is the only one that could reveal the property related to the location of the data g.m..

\section{Conclusion}
In this paper, we extend the ANM to a more general model (ANM-MM) in which there are a finite number of ANMs of the same function form and differ only in parameter values. The condition of identifiability of ANM-MM is analyzed. To estimate ANM-MM, we adopt the GP-LVM framework and propose a variant of it called GPPOM to find the optimized latent representations and further conduct causal inference and mechanism clustering. Results on both synthetic and real world data verify the effectiveness of our proposed approach.

\subsubsection*{Acknowledgments}
This work is partially supported by the Hong Kong Research Grants Council.

% \subsubsection*{Acknowledgments}

% Use unnumbered third level headings for the acknowledgments. All
% acknowledgments go at the end of the paper. Do not include
% acknowledgments in the anonymized submission, only in the final paper.

\bibliographystyle{apalike}
\bibliography{ref}

\appendix
\newpage
\section*{Supplementary}

\section{Proof of Lemma 1}

\begin{proof}
%%% Move to Appendix 
If there exists an Additive Noise Model (ANM) in the backward direction, i.e.,
\[
X = g(Y) + \tilde{\epsilon},
\]
where $\tilde{\epsilon} \perp\!\!\!\perp Y$, then we have 
\[
p(X,Y) = p_{\tilde{\epsilon}}(X-g(Y))p_Y(Y),
\]
and thus
\[
\pi(X,Y) = \log p(X,Y) = {\log(p_{\tilde{\epsilon}}(X-g(Y)))} + {\log p_{Y}(Y)}.
\]
Denote by $\tilde{v}(\cdot) = \log p_{\tilde{\epsilon}}(\cdot)$ and $\tilde{\xi}(\cdot) = \log p_{Y}(\cdot)$.
Taking partial derivative of $\pi(X,Y)$ with respect to $X$, we get
\[
\frac{\partial \pi}{\partial X} = \tilde{v}'(X-g(Y)).
\]
Furthermore, we have 
\[
\frac{\partial^2 \pi}{\partial X^2} = \tilde{v}''(X-g(Y)),
\]
and 
\[
\frac{\partial \pi}{\partial X \partial Y} = -\tilde{v}''(X-g(Y)) g'(Y).
\]
We find that 
\[
\frac{\partial^2 \pi / \partial X \partial Y}{\partial \pi / \partial^2 X} = -g'(Y),
\]
and thus 
\[
\frac{\partial}{\partial X} \left( \frac{\partial^2 \pi / \partial X \partial Y}{\partial^2 \pi / \partial X^2}  \right) = 0.
\]

Let us get back to the forward model where we have 
\begin{equation}\label{eq:joint}
p(X,Y) = p_X(X) \sum_{c=1}^{C} a_c p_{\epsilon}(Y - f_c(X)).
\end{equation}
Taking $\log$ of both sides of \eqref{eq:joint}, we get
\[
\pi(X,Y) = \log p(X,Y) = \log \sum_{c=1}^{C} a_c p_{\epsilon}(Y - f_c(X)) + \log p_X(X).
\]
For notation simplicity, we drop the argument of $p_{\epsilon}(Y-f_c(X))$ and denote by $\xi(\cdot) = \log(p_X(\cdot))$, we get
\[
\frac{\partial \pi}{\partial X} = \frac{-1}{\sum_{c} a_c p_{\epsilon}(Y-f_c(X))}\sum_c a_c p'_{\epsilon}(Y-f_c(X))f'_c(X) + \xi'(X)
\]
and 
\[
\begin{split}
\frac{\partial^2 \pi}{\partial X\partial Y} = & \frac{1}{\left(\sum_{c} a_c p_{\epsilon}(Y-f_c(X))\right)^2} \sum_c a_c p'_{\epsilon}(Y-f_c(X))\sum_c a_c p'_{\epsilon}(Y-f_c(X))f'_c(X) \\
& + \frac{-1}{\sum_{c} a_c p_{\epsilon}(Y-f_c(X))}\sum_c a_c p''_{\epsilon}(Y-f_c(X))f'_c(X) 
\end{split}
\]

\[
\begin{split}
\frac{\partial^2 \pi}{\partial X^2} = & \frac{-1}{\left(\sum_c a_c p_{\epsilon}(Y-f_c(X))\right)^2}\left(\sum_c a_c p'_{\epsilon}(Y-f_c(X))f'_c(X)\right)^2\\
& + \frac{1}{\sum_c a_c p_{\epsilon}(Y-f_c(X))} \sum_c a_c p''_c(Y-f_c(X))(f'_c(X))^2 \\
& + \frac{-1}{\sum_c a_c p_{\epsilon}(Y-f_c(X))} \sum_c a_c p'_{\epsilon}(Y-f_c(X))f''_c(X) + \xi''(X)
\end{split}
\]
Let 
\[
u = \frac{\partial^2 \pi}{\partial X\partial Y}
\]
and denote by $p_{\epsilon,c}=p_{\epsilon}(Y-f_c(X))$, $p'_{\epsilon,c}=p'_{\epsilon}(Y-f_c(X))$,$p''_{\epsilon,c}=p''_{\epsilon}(Y-f_c(X))$,$p'''_{\epsilon,c}=p'''_{\epsilon}(Y-f_c(X))$
and $f_c=f_c(X)$, $f'_c = f'_c(X)$, $f''_c=f''_c(X)$ and $f'''_c =f'''_c(X)$, $\xi = \xi(X)$, $\xi'=\xi'(X)$, $\xi''=\xi''(X)$ and $\xi'''=\xi'''(X)$.
We have 
\[
\begin{split}
\frac{\partial u}{\partial X} = & \frac{2}{\left(\sum_{c} a_c p_{\epsilon,c}\right)^3}\sum_c a_c p_{\epsilon,c}f'_c\sum_c a_c p'_{\epsilon,c}\sum_c a_c p'_{\epsilon,c}f'_c \\
&+\frac{1}{\left(\sum_{c} a_c p_{\epsilon,c}\right)^2}\left(-\sum_c a_c p''_{\epsilon,c}f'_c\sum_c a_c p'_{\epsilon,c}f'_c  - \sum_c a_c p'_{\epsilon,c}\sum_c a_c p''_{\epsilon,c}(f'_c)^2 + \sum_c a_c p'_{\epsilon,c}\sum_c a_c p'_{\epsilon,c}f''_c\right)\\
&+ \frac{-1}{\left(\sum_c a_c p_{\epsilon,c}\right)^2} \sum_c a_c p'_{\epsilon,c}f'_c\sum_c a_c p''_{\epsilon,c}f'_c + \frac{1}{\sum_c a_c p_{\epsilon,c}} \sum_c a_c p'''_{\epsilon,c}(f'_c)^2 + \frac{1}{\sum_c a_c p_{\epsilon,c}} \sum_c a_c p''_{\epsilon,c}f''_c\\
 = & \frac{2}{\left(\sum_{c} a_c p_{\epsilon,c}\right)^3}\sum_c a_c p_{\epsilon,c}f'_c\sum_c a_c p'_{\epsilon,c}\sum_c a_c p'_{\epsilon,c}f'_c \\
&+\frac{1}{\left(\sum_{c} a_c p_{\epsilon,c}\right)^2}\left(-2\sum_c a_c p''_{\epsilon,c}f'_c\sum_c a_c p'_{\epsilon,c}f'_c  - \sum_c a_c p'_{\epsilon,c}\sum_c a_c p''_{\epsilon,c}(f'_c)^2 + \sum_c a_c p'_{\epsilon,c}\sum_c a_c p'_{\epsilon,c}f''_c\right)\\
&+ \frac{1}{\sum_c a_c p_{\epsilon,c}} \sum_c a_c p'''_{\epsilon,c}(f'_c)^2 + \frac{1}{\sum_c a_c p_{\epsilon,c}} \sum_c a_c p''_{\epsilon,c}f''_c.
\end{split}
\]
Denote by 
\[
v=\frac{\partial^2 \pi}{\partial X^2},
\]
then we have 
\[
\begin{split}
\frac{\partial v}{\partial X} = & \frac{-2}{(\sum_c a_c p_{\epsilon,c})^3} \left(\sum_c a_c p'_{\epsilon,c}f'_c\right)^3+ \frac{-2}{(\sum_c a_c p_{\epsilon,c})^2} (\sum_c a_c p'_{\epsilon,c}f'_c)\sum_c a_c (p''_{\epsilon,c}(-f'_c)f_c + p'_{\epsilon,c}f''_c)\\
&+ \frac{-1}{(\sum_c a_c p_{\epsilon,c})^2} \sum_c a_c p'_{\epsilon,c}f'_c \sum_c a_c p''_{\epsilon,c}(f'_c)^2 + \frac{-1}{\sum_c a_c p_{\epsilon,c}} \sum_c a_c p'''_{\epsilon,c} (f_c)^3 + \frac{2}{\sum_c a_c p_{\epsilon,c}} \sum_c a_c p''_{\epsilon,c} f'_{c}f''_c \\
&+\frac{-1}{(\sum_c a_c p_{\epsilon,c})^2} \sum_c a_c p'_{\epsilon,c}f'_c \sum_c a_c p'_{\epsilon,c} f''_c + \frac{1}{\sum_c a_c p_{\epsilon,c}} \sum_c a_c p''_{\epsilon,c}f'_c f''_c + \frac{-1}{\sum_c a_c p_{\epsilon,c}} \sum_c a_c p'_{\epsilon,c}f'''_c + \xi''' 
\end{split}
\]

Further denote by 
\[
U(X,Y) =  \frac{\partial^2 \pi}{\partial X \partial Y} =  \frac{1}{\left(\sum_{c} a_c p_{\epsilon,c}\right)^2} \sum_c a_c p'_{\epsilon,c}\sum_c a_c p'_{\epsilon,c}f'_c + \frac{-1}{\sum_{c} a_c p_{\epsilon,c}}\sum_c a_c p''_{\epsilon,c}f'_c
\]
and 
\[
V(X,Y) = \frac{\partial^2 \pi}{\partial X^2} = \frac{-1}{\left(\sum_c a_c p_{\epsilon,c}\right)^2}\left(\sum_c a_c p'_{\epsilon,c}f'_c\right)^2 + \frac{1}{\sum_c a_c p_{\epsilon,c}} \sum_c a_c p''_{\epsilon,c}(f'_c)^2 + \frac{-1}{\sum_c a_c p_{\epsilon,c}} \sum_c a_c p'_{\epsilon,c}f''_c 
\]
\[
\begin{split}
G(X,Y) = & \frac{2}{\left(\sum_{c} a_c p_{\epsilon,c}\right)^3}\sum_c a_c p_{\epsilon,c}f'_c\sum_c a_c p'_{\epsilon,c}\sum_c a_c p'_{\epsilon,c}f'_c \\
&+\frac{1}{\left(\sum_{c} a_c p_{\epsilon,c}\right)^2}\left(-2\sum_c a_c p''_{\epsilon,c}f'_c\sum_c a_c p'_{\epsilon,c}f'_c  - \sum_c a_c p'_{\epsilon,c}\sum_c a_c p''_{\epsilon,c}(f'_c)^2 + \sum_c a_c p'_{\epsilon,c}\sum_c a_c p'_{\epsilon,c}f''_c\right)\\
&+ \frac{1}{\sum_c a_c p_{\epsilon,c}} \sum_c a_c p'''_{\epsilon,c}(f'_c)^2 + \frac{1}{\sum_c a_c p_{\epsilon,c}} \sum_c a_c p''_{\epsilon,c}f''_c
\end{split}
\]
and 
\[
\begin{split}
H(X,Y) = & \frac{-2}{(\sum_c a_c p_{\epsilon,c})^3} \left(\sum_c a_c p'_{\epsilon,c}f'_c\right)^3+ \frac{-2}{(\sum_c a_c p_{\epsilon,c})^2} (\sum_c a_c p'_{\epsilon,c}f'_c)\sum_c a_c (p''_{\epsilon,c}(-f'_c)f_c + p'_{\epsilon,c}f''_c)\\
&+ \frac{-1}{(\sum_c a_c p_{\epsilon,c})^2} \sum_c a_c p'_{\epsilon,c}f'_c \sum_c a_c p''_{\epsilon,c}(f'_c)^2 + \frac{-1}{\sum_c a_c p_{\epsilon,c}} \sum_c a_c p'''_{\epsilon,c} (f_c)^3 + \frac{2}{\sum_c a_c p_{\epsilon,c}} \sum_c a_c p''_{\epsilon,c} f'_{c}f''_c \\
&+\frac{-1}{(\sum_c a_c p_{\epsilon,c})^2} \sum_c a_c p'_{\epsilon,c}f'_c \sum_c a_c p'_{\epsilon,c} f''_c + \frac{1}{\sum_c a_c p_{\epsilon,c}} \sum_c a_c p''_{\epsilon,c}f'_c f''_c + \frac{-1}{\sum_c a_c p_{\epsilon,c}} \sum_c a_c p'_{\epsilon,c}f'''_c 
\end{split}
\]
Since
\[
\begin{split}
\frac{\partial ^2 \pi / \partial X\partial Y}{\partial^2 \pi /\partial X^2} = 0
\end{split}
\]
We have 
\[
u\frac{\partial v}{\partial X} - \frac{\partial u}{\partial X} v = 0
\]
\[
U(X,Y) (H(X,Y) + \xi''') - G(X,Y) (V(X,Y) + \xi'') = 0
\]
Thus, we have
\begin{equation}
\xi''' - \frac{G(X,Y)}{H(X,Y)} \xi'' = \frac{G(X,Y)V(X,Y)}{U(X,Y)} - H(X,Y) 
\end{equation}

%% move to appendix %%%%%%%%%%
\end{proof}
%The other part $\frac{\partial \mathbf{K}_{\Theta}}{\partial \left[\bm{\Theta}\right]_{ij}}$ is a matrix whose entry on the $m$-th row and $n$-th column is given by $\frac{\partial \left[\mathbf{K}_{\Theta}\right]_{mn}}{\partial \left[\bm{\Theta}\right]_{ij}}$. By combining $\frac{\partial \mathcal{J}}{\partial \mathbf{K}_{\Theta}}$ and $\frac{\partial \mathbf{K}_{\Theta}}{\partial \left[\bm{\Theta}\right]_{ij}}$ through the chain rule, we obtain the gradient w.r.t. each entry in $\bm{\Theta}$ and can thus adopt scaled conjugate gradient method to optimize $\mathcal{J}$. 
%\begin{equation}
%\left[ \frac{\partial \mathbf{K}_{\Theta}}{\partial \left[\bm{\Theta}\right]_{ij}} \right]_{nd} = \frac{\partial \left[\mathbf{K}_{\Theta}\right]_{nd}}{\partial \left[\bm{\Theta}\right]_{ij}} = \left \{
%\begin{aligned}
%&- 2\gamma_{d} \cdot k(\bm{\theta}_{n}, \bm{\theta}_{d}) \cdot(\bm{\theta}_{ij} - \bm{\theta}_{nj}), && \text{if}\ n = i \\
%&- 2\gamma_{d} \cdot k(\bm{\theta}_{n}, \bm{\theta}_{d})  \cdot (\bm{\theta}_{ij} - \bm{\theta}_{nj}), && \text{if}\ d = i \\
%& \; 0, && \text{otherwise}
%\end{aligned}, \right.
%\label{grad_wrt_data}
%\end{equation}

\section{Derivation of (10)}
The objective function $\mathcal{J}$ reads
\begin{align}
\mathcal{J} = - \mathcal{L}(\bm{\Theta} |\mathbf{X}, \mathbf{Y}, \bm{\Omega}) + \lambda \log \text{HSIC}_{\text{b}}(\mathbf{X}, \bm{\Theta}).
\end{align}

Then the gradient of $\mathcal{J}$ with respect to (w.r.t.) latent points $\bm{\Theta}$ can be computed as 
\begin{align}
\frac{\partial \mathcal{J}}{\partial \left[\bm{\Theta}\right]_{ij}} = \tr\left[ \left( \frac{\partial \mathcal{J}}{\partial \mathbf{K}_{\Theta}} \right)^{T} \frac{\partial \mathbf{K}_{\Theta}}{\partial \left[\bm{\Theta}\right]_{ij}} \right],
\label{djdth}
\end{align}
where $\mathbf{K}_{\Theta}$ is the kernel matrix of latent points in $\bm{\Theta}$. $\frac{\partial \mathcal{J}}{\partial \mathbf{K}_{\Theta}}$ can be obtained by
\begin{align}
\frac{\partial \mathcal{J}}{\partial \mathbf{K}_{\Theta}} = \frac{\partial -\mathcal{L}}{\partial \mathbf{K}_{\Theta}} + \frac{\partial }{\partial \mathbf{K}_{\Theta}} \lambda \log \text{HSIC}_{\text{b}}(\mathbf{X}, \bm{\Theta}).
\label{djdk}
\end{align}
The first term is computed as
\begin{align}
-\frac{\partial \mathcal{L}}{\partial \mathbf{K}_{\Theta}} = -\frac{\partial \mathcal{L}}{\partial \left[\mathbf{K}_{\Theta}\right]_{ij}} &=- \tr\left[ \left( \frac{\partial \mathcal{L}}{\partial \mathbf{\tilde{K}}}\right)^{T} \frac{\partial \mathbf{\tilde{K}}}{\partial \left[\mathbf{K}_{\Theta} \right]_{ij}} \right] \nonumber \\
& =- \tr \left[ \left(\mathbf{\tilde{K}}^{-1}\mathbf{YY}^{T}\mathbf{\tilde{K}}^{-1} - D\mathbf{\tilde{K}}^{-1}\right)^{T} \left( \frac{\partial }{\partial \left[\mathbf{K}_{\Theta} \right]_{ij}} (\mathbf{K}_{X}\circ\mathbf{K}_{\Theta})  \right) \right] \nonumber \\
& =- \tr \left[ \left(\mathbf{\tilde{K}}^{-1}\mathbf{YY}^{T}\mathbf{\tilde{K}}^{-1} - D\mathbf{\tilde{K}}^{-1}\right)^{T} \left( \frac{\partial \mathbf{K}_{X} }{\partial \left[\mathbf{K}_{\Theta} \right]_{ij}} \circ\mathbf{K}_{\Theta} + \mathbf{K}_{X} \circ \frac{\partial \mathbf{K}_{\Theta} }{\partial \left[\mathbf{K}_{\Theta} \right]_{ij}} \right) \right] \nonumber \\
& =- \tr \left[ \left(\mathbf{\tilde{K}}^{-1}\mathbf{YY}^{T}\mathbf{\tilde{K}}^{-1} - D\mathbf{\tilde{K}}^{-1}\right)^{T} \left( \mathbf{K}_{X} \circ \frac{\partial \mathbf{K}_{\Theta} }{\partial \left[\mathbf{K}_{\Theta} \right]_{ij}} \right) \right] \nonumber \\
& =- \tr \left[ \left(\mathbf{\tilde{K}}^{-1}\mathbf{YY}^{T}\mathbf{\tilde{K}}^{-1} - D\mathbf{\tilde{K}}^{-1}\right)^{T} \left( \mathbf{K}_{X} \circ \mathbf{J}^{ij} \right) \right],
\end{align}
where $\circ$ denotes the Hadamard product and $\mathbf{J}^{ij}$ is the single-entry matrix, 1 at $(i,j)$ and 0 elsewhere. The second term in \eqref{djdk} can be computed as
\begin{align}
\frac{\partial }{\partial \mathbf{K}_{\Theta}} \lambda \log \text{HSIC}_{\text{b}}(\mathbf{X}, \bm{\Theta}) = \frac{\partial }{\partial \mathbf{K}_{\Theta}} \lambda \log\tr\left(\mathbf{K}_{X}\mathbf{HK}_{\Theta}\mathbf{H} \right) = \lambda \frac{1}{\tr\left(\mathbf{K}_{X}\mathbf{HK}_{\Theta}\mathbf{H} \right))} \mathbf{HK}_{X}\mathbf{H},
\end{align}
where $\mathbf{H} = \mathbf{I} - \frac{1}{m}\vec{\mathbf{1}}\vec{\mathbf{1}}^{T}$ and $\vec{\mathbf{1}}$ is a $m \times 1$ vector of ones. To this stage, we have found $\frac{\partial \mathcal{J}}{\partial \mathbf{K}_{\Theta}}$ in \eqref{djdth}. %The other part $\frac{\partial \mathbf{K}_{\Theta}}{\partial \left[\bm{\Theta}\right]_{ij}}$ is a matrix whose entry on the $m$-th row and $n$-th column is given by $\frac{\partial \left[\mathbf{K}_{\Theta}\right]_{mn}}{\partial \left[\bm{\Theta}\right]_{ij}}$. By combining $\frac{\partial \mathcal{J}}{\partial \mathbf{K}_{\Theta}}$ and $\frac{\partial \mathbf{K}_{\Theta}}{\partial \left[\bm{\Theta}\right]_{ij}}$ through the chain rule, we obtain the gradient w.r.t. each entry in $\bm{\Theta}$ and can thus adopt scaled conjugate gradient method to optimize $\mathcal{J}$. 
%\begin{equation}
%\left[ \frac{\partial \mathbf{K}_{\Theta}}{\partial \left[\bm{\Theta}\right]_{ij}} \right]_{nd} = \frac{\partial \left[\mathbf{K}_{\Theta}\right]_{nd}}{\partial \left[\bm{\Theta}\right]_{ij}} = \left \{
%\begin{aligned}
%&- 2\gamma_{d} \cdot k(\bm{\theta}_{n}, \bm{\theta}_{d}) \cdot(\bm{\theta}_{ij} - \bm{\theta}_{nj}), && \text{if}\ n = i \\
%&- 2\gamma_{d} \cdot k(\bm{\theta}_{n}, \bm{\theta}_{d})  \cdot (\bm{\theta}_{ij} - \bm{\theta}_{nj}), && \text{if}\ d = i \\
%& \; 0, && \text{otherwise}
%\end{aligned}, \right.
%\label{grad_wrt_data}
%\end{equation}

\section{Adjusted Rand Index}
This section contains the definition of adjusted rand index (ARI) \footnote{Hubert, L., \& Arabie, P. (1985). Comparing partitions. Journal of classification, 2(1), 193-218.} for reference.\footnote{\url{https://en.wikipedia.org/wiki/Rand_index}}

The ARI is the corrected-for-chance version of the Rand index \footnote{Rand, W. M. (1971). Objective criteria for the evaluation of clustering methods. Journal of the American Statistical association, 66(336), 846-850.}. Though the Rand Index may only yield a value between 0 and +1, the ARI can yield negative values if the index is less than the expected index.

\subsection*{The contingency table}
Given a set $S$ of $n$ elements, and two groupings or partitions (e.g. clusterings) of these elements, namely $X=\lbrace X_{1}, X_{2}, \dots, X_{r} \rbrace$ and $Y=\lbrace Y_{1}, Y_{2}, \dots, Y_{s} \rbrace$, the overlap between $X$ and $Y$ can be summarized in a contingency table $[n_{ij}]$ where each entry $n_{ij}$ denotes the number of objects in common between $X_{i}$ and $Y_{j}$: $n_{ij}=\vert X_{i} \cap Y_{j} \vert$.

\begin{table}[h]
	\centering
	\caption{Contingency table}
	\begin{tabular}{c|cccc|c}
		& $Y_{1}$ & $Y_{2}$ & $\dots$ & $Y_{s}$ & Sums \\
		\midrule
		$X_{1}$ & $n_{11}$ & $n_{12}$ & $\dots$ & $n_{1s}$ & $a_{1}$ \\
		$X_{2}$ & $n_{21}$ & $n_{22}$ & $\dots$ & $n_{2s}$ & $a_{2}$ \\
		$\vdots$ & $\vdots$ & $\vdots$ & $\ddots$ & $\vdots$ & $\vdots$ \\
		$X_{r}$ & $n_{r1}$ & $n_{r2}$ & $\dots$ & $n_{rs}$ & $a_{r}$ \\
		\midrule
		Sums & $b_{1}$ & $b_{2}$ & $\dots$ & $b_{s}$ &
	\end{tabular}
\end{table}

\subsection*{Definition}
The adjusted form of the Rand Index, the ARI is
\begin{align}
ARI = \frac{ \sum_{ij} \begin{pmatrix} n_{ij}\\2 \end{pmatrix} -  \left[ \sum_{i}\begin{pmatrix} a_{i}\\2 \end{pmatrix} \sum_{j} \begin{pmatrix} b_{j}\\2 \end{pmatrix} \right] / \begin{pmatrix} n\\2 \end{pmatrix} }{ \frac{1}{2} \left[ \sum_{i}\begin{pmatrix} a_{i}\\2 \end{pmatrix} + \sum_{j} \begin{pmatrix} b_{j}\\2 \end{pmatrix} \right] - \left[ \sum_{i}\begin{pmatrix} a_{i}\\2 \end{pmatrix} \sum_{j} \begin{pmatrix} b_{j}\\2 \end{pmatrix} \right] / \begin{pmatrix} n\\2 \end{pmatrix} }
\end{align}
where $n_{ij}$, $a_{i}$, $b_{j}$ are values from the contingency table.

\section{Clustering Results Visualization}
In this section, clustering results with ARI of ANM-MM close to avgARI\footnote{in the sense that $|\text{ARI} - \text{avgARI}|<0.05$} shown in Table 1 are visualized. Results of comparing approaches on the same data are also given.  

\subsection{Experiments different generating mechanisms and sample size}
The ground truth and clustering results of all approaches in one of the 100 independent experiments are visualized in Fig. \ref{fig:clu1}.
\begin{figure}[t]
	\begin{subfigure}{.24\linewidth}
		\centering
		\includegraphics[width=\linewidth]{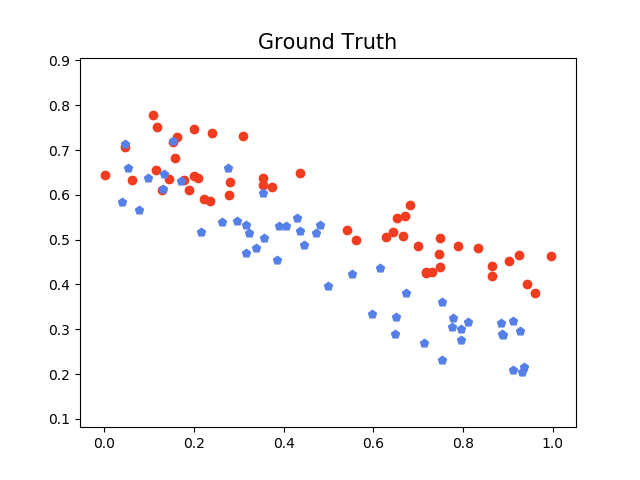}
		\caption{Ground truth $f_{1}$}
		\label{Fig:1_1}
	\end{subfigure}
	\begin{subfigure}{.24\linewidth}
		\centering
		\includegraphics[width=\linewidth]{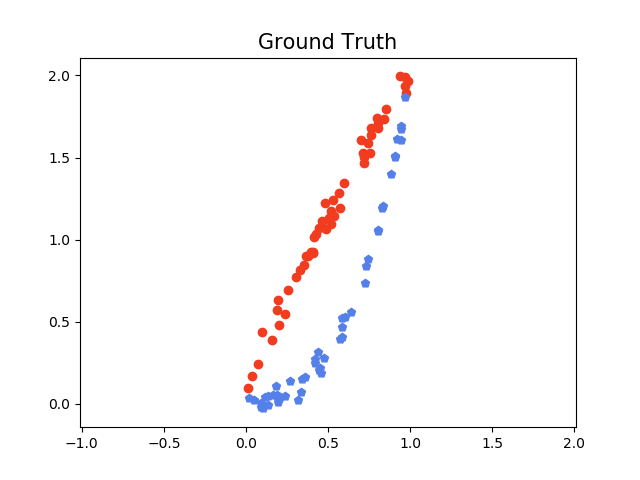}
		\caption{$f_{2}$}
		\label{Fig:1_2}
	\end{subfigure} 
	\begin{subfigure}{.24\linewidth}
		\centering
		\includegraphics[width=\linewidth]{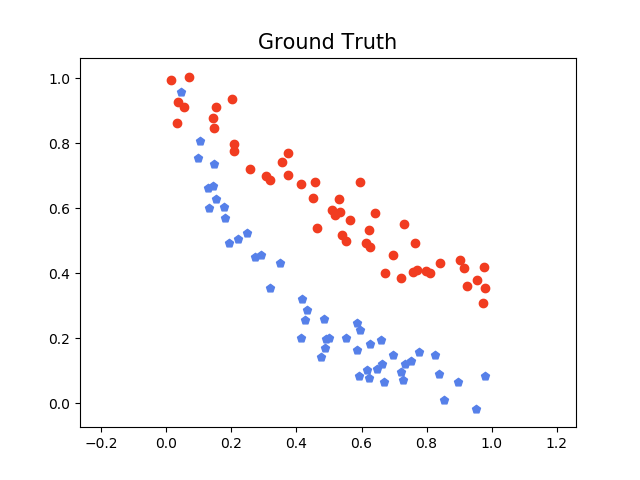}
		\caption{$f_{3}$}
		\label{Fig:1_3}
	\end{subfigure}
	\begin{subfigure}{.24\linewidth}
		\centering
		\includegraphics[width=\linewidth]{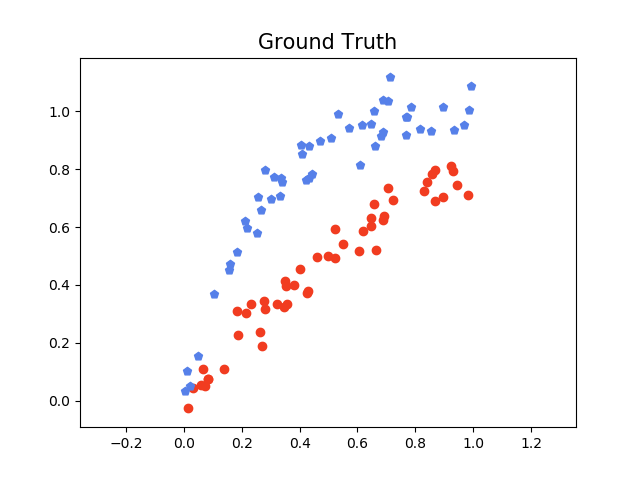}
		\caption{$f_{4}$}
		\label{Fig:1_4}
	\end{subfigure} \\
	
	\begin{subfigure}{.24\linewidth}
		\centering
		\includegraphics[width=\linewidth]{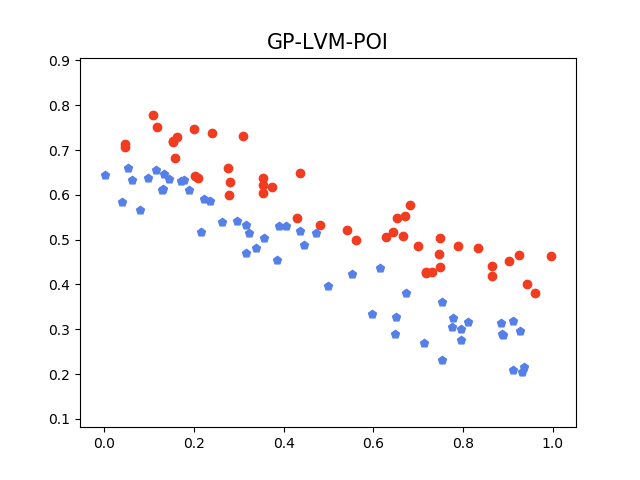}
		\caption{ANM-MM $f_{1}$}
		\label{Fig:2_1}
	\end{subfigure}
	\begin{subfigure}{.24\linewidth}
		\centering
		\includegraphics[width=\linewidth]{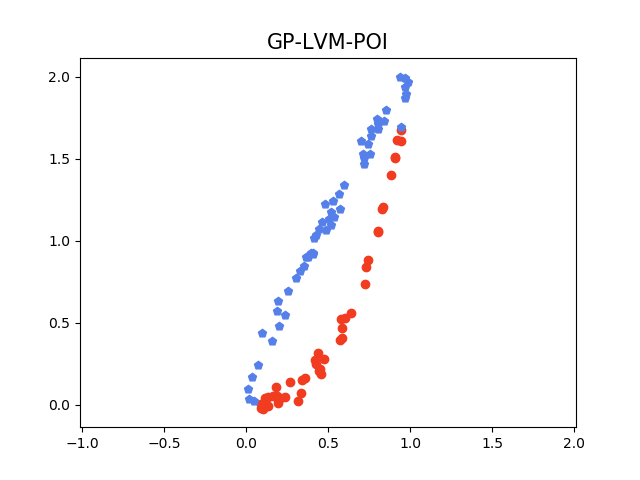}
		\caption{$f_{2}$}
		\label{Fig:2_2}
	\end{subfigure}
	\begin{subfigure}{.24\linewidth}
		\centering
		\includegraphics[width=\linewidth]{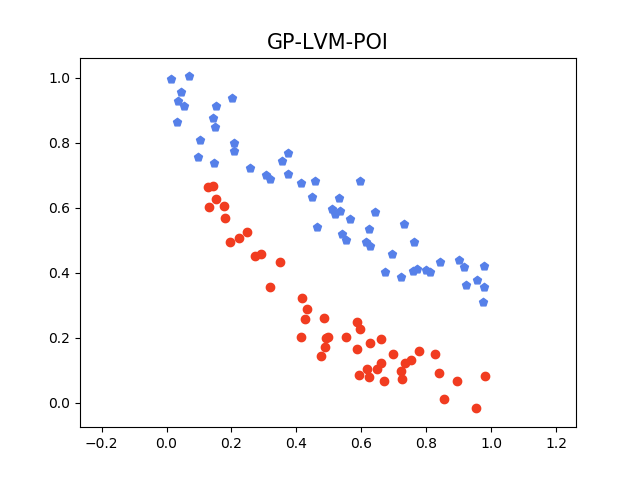}
		\caption{$f_{3}$}
		\label{Fig:2_3}
	\end{subfigure}
	\begin{subfigure}{.24\linewidth}
		\centering
		\includegraphics[width=\linewidth]{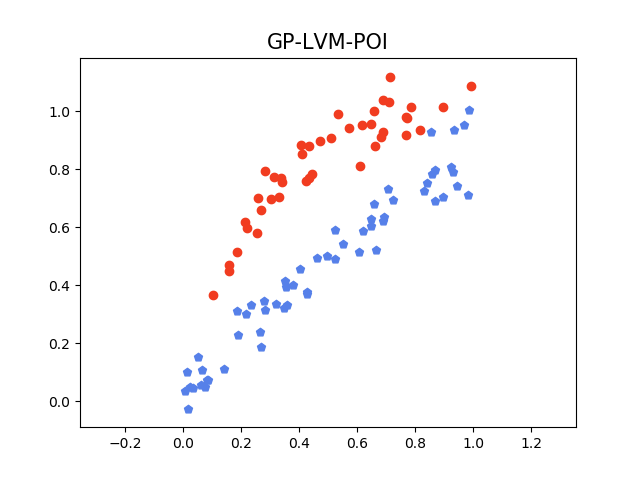}
		\caption{$f_{4}$}
		\label{Fig:2_4}
	\end{subfigure} \\
	
	\begin{subfigure}{.24\linewidth}
		\centering
		\includegraphics[width=\linewidth]{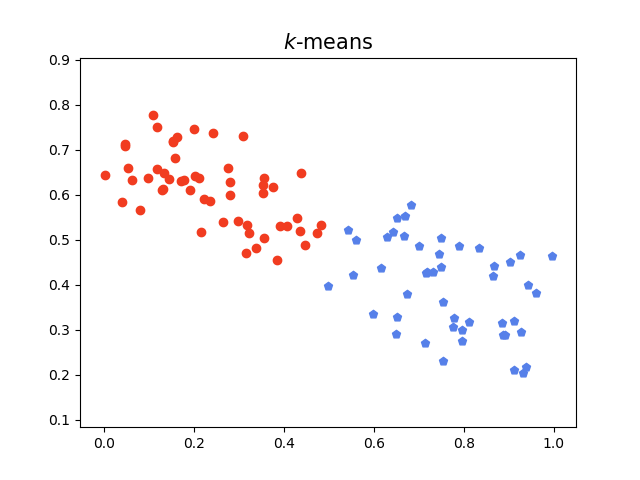}
		\caption{$k$-means $f_{1}$}
		\label{Fig:3_1}
	\end{subfigure}
	\begin{subfigure}{.24\linewidth}
		\centering
		\includegraphics[width=\linewidth]{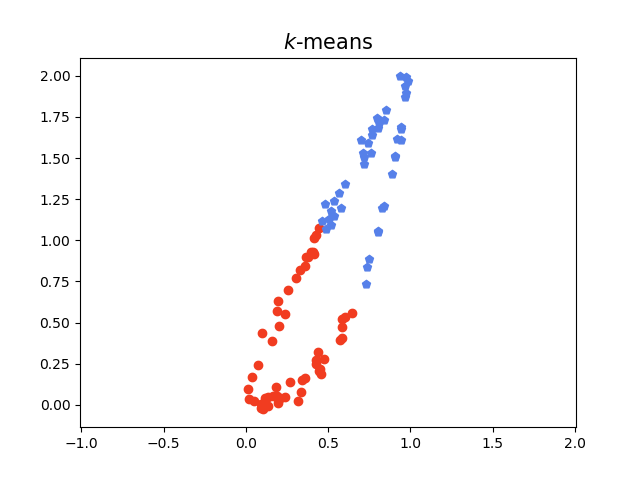}
		\caption{$f_{2}$}
		\label{Fig:3_2}
	\end{subfigure}
	\begin{subfigure}{.24\linewidth}
		\centering
		\includegraphics[width=\linewidth]{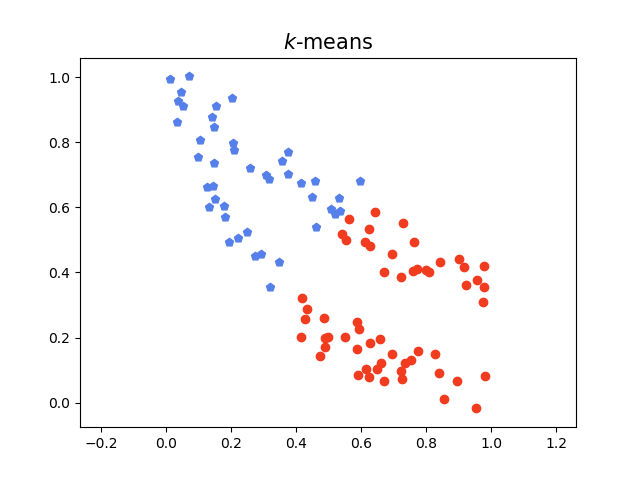}
		\caption{$f_{3}$}
		\label{Fig:3_3}
	\end{subfigure}
	\begin{subfigure}{.24\linewidth}
		\centering
		\includegraphics[width=\linewidth]{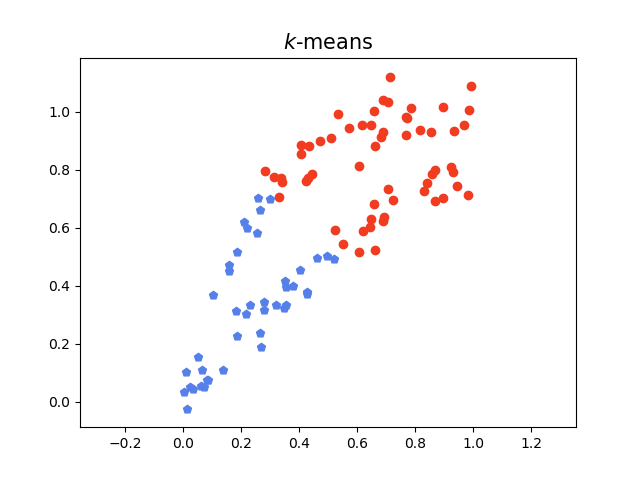}
		\caption{$f_{4}$}
		\label{Fig:3_4}
	\end{subfigure} \\
	
	\begin{subfigure}{.24\linewidth}
		\centering
		\includegraphics[width=\linewidth]{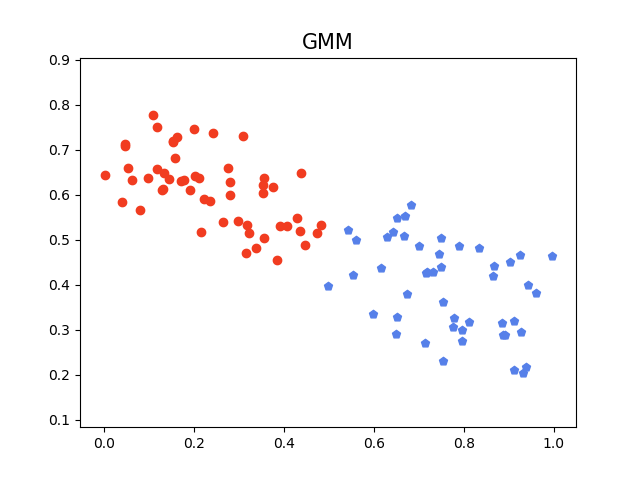}
		\caption{GMM $f_{1}$}
		\label{Fig:4_1}
	\end{subfigure}
	\begin{subfigure}{.24\linewidth}
		\centering
		\includegraphics[width=\linewidth]{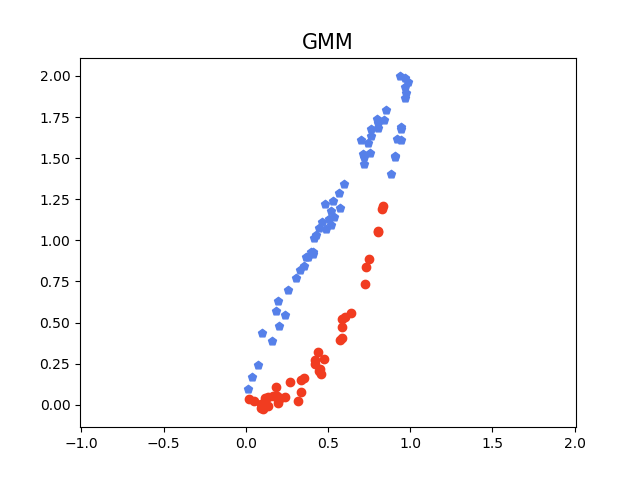}
		\caption{$f_{2}$}
		\label{Fig:4_2}
	\end{subfigure}
	\begin{subfigure}{.24\linewidth}
		\centering
		\includegraphics[width=\linewidth]{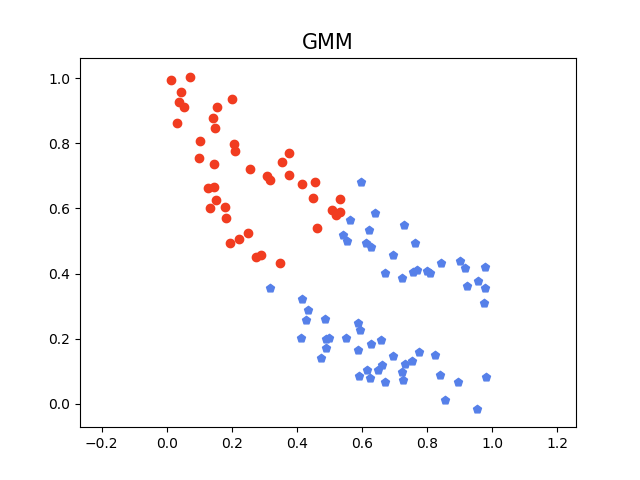}
		\caption{$f_{3}$}
		\label{Fig:4_3}
	\end{subfigure}
	\begin{subfigure}{.24\linewidth}
		\centering
		\includegraphics[width=\linewidth]{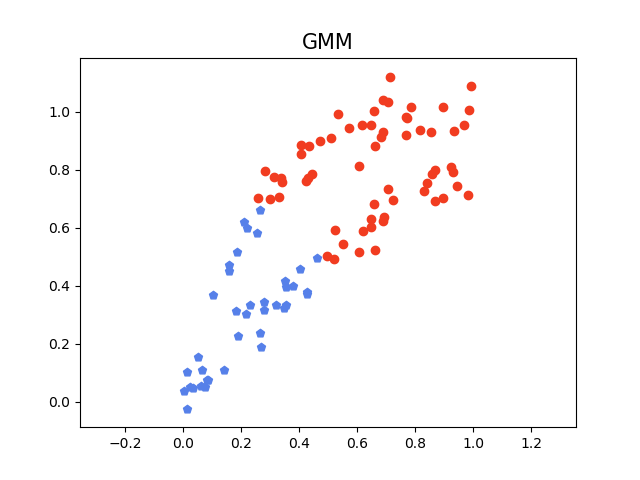}
		\caption{$f_{4}$}
		\label{Fig:4_4} 
	\end{subfigure} \\
	
	\begin{subfigure}{.24\linewidth}
		\centering
		\includegraphics[width=\linewidth]{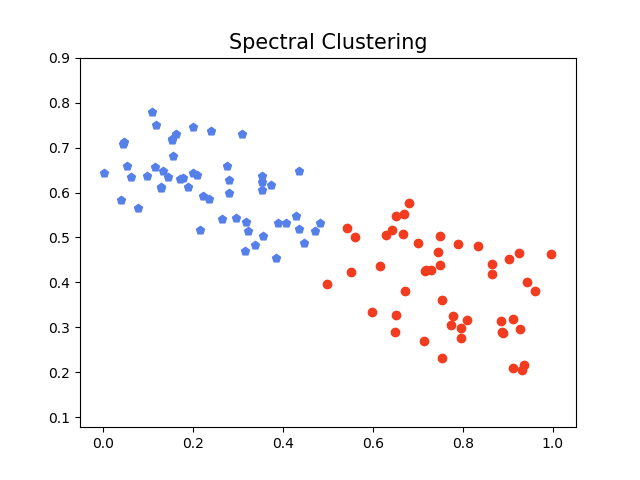}
		\caption{Spectral clustering $f_{1}$}
		\label{Fig:5_1}
	\end{subfigure}
	\begin{subfigure}{.24\linewidth}
		\centering
		\includegraphics[width=\linewidth]{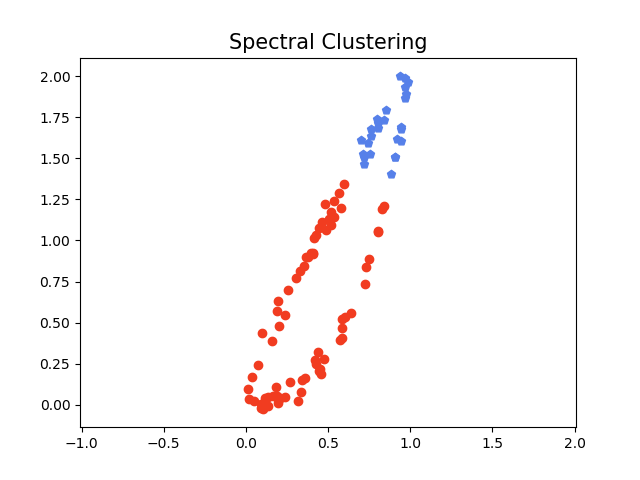}
		\caption{$f_{2}$}
		\label{Fig:5_2}
	\end{subfigure}
	\begin{subfigure}{.24\linewidth}
		\centering
		\includegraphics[width=\linewidth]{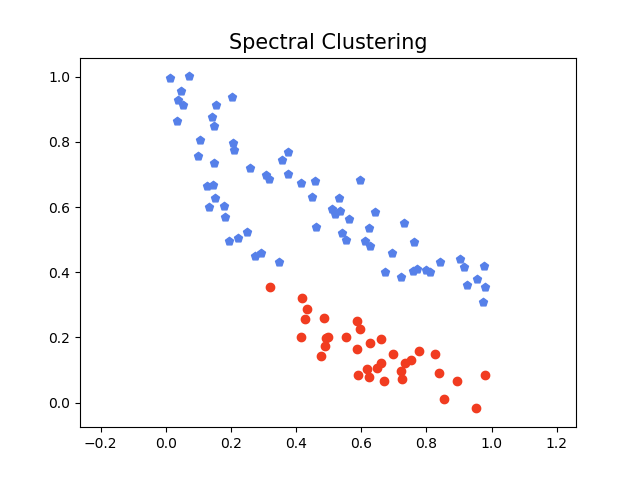}
		\caption{$f_{3}$}
		\label{Fig:5_3}
	\end{subfigure}
	\begin{subfigure}{.24\linewidth}
		\centering
		\includegraphics[width=\linewidth]{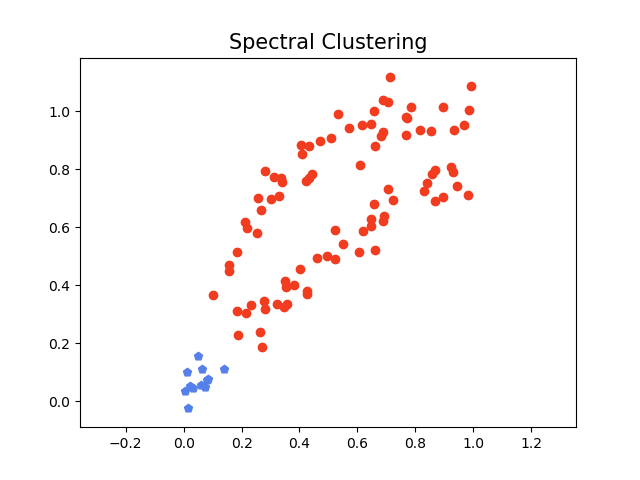}
		\caption{$f_{4}$}
		\label{Fig:5_4} 
	\end{subfigure} \\
	
	\begin{subfigure}{.24\linewidth}
		\centering
		\includegraphics[width=\linewidth]{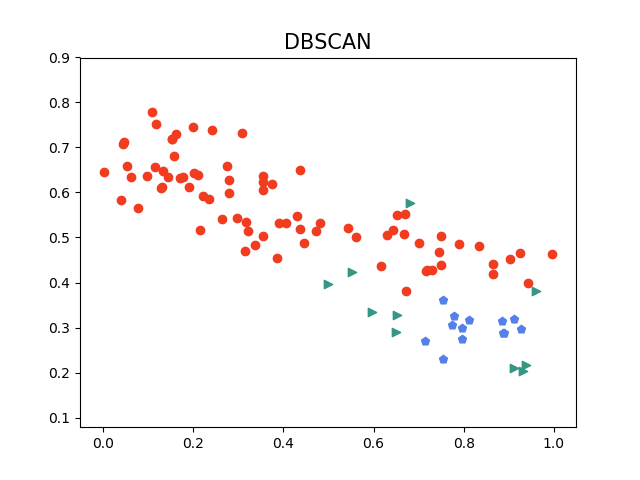}
		\caption{DBSCAN $f_{1}$}
		\label{Fig:6_1}
	\end{subfigure}
	\begin{subfigure}{.24\linewidth}
		\centering
		\includegraphics[width=\linewidth]{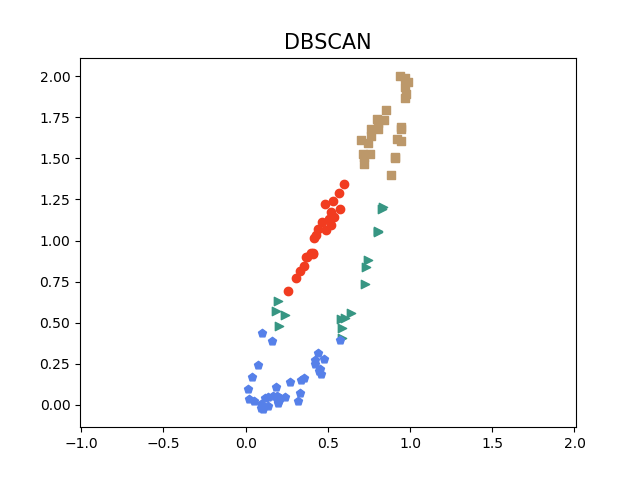}
		\caption{$f_{2}$}
		\label{Fig:6_2}
	\end{subfigure}
	\begin{subfigure}{.24\linewidth}
		\centering
		\includegraphics[width=\linewidth]{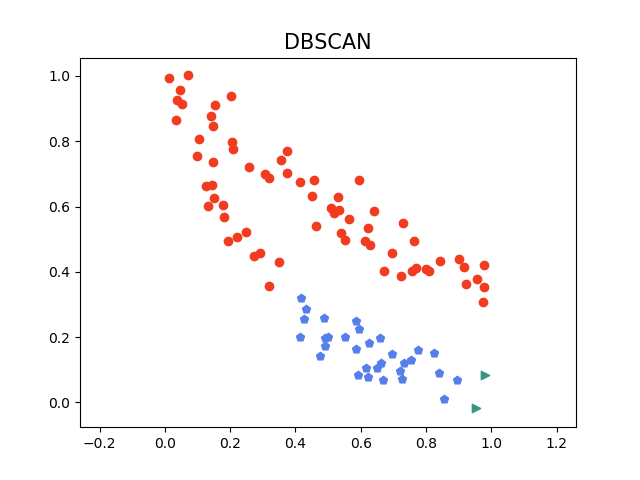}
		\caption{$f_{3}$}
		\label{Fig:6_3}
	\end{subfigure}
	\begin{subfigure}{.24\linewidth}
		\centering
		\includegraphics[width=\linewidth]{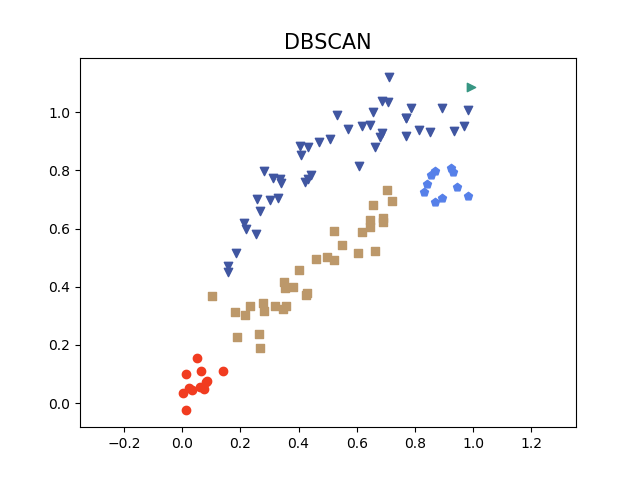}
		\caption{$f_{4}$}
		\label{Fig:6_4}
	\end{subfigure}
	\caption{Clustering results different type of mechanisms. The first row shows the ground truth and remaining rows correspond to different clustering approaches. Each column corresponds to a generating mechanism.}
	\label{fig:clu1}
\end{figure}

\subsection{Experiments on different number of generating mechanisms}
The ground truth and clustering results of all approaches in one of the 100 independent experiments are visualized in Fig. \ref{fig:clu2}.
\begin{figure}[t]
\centering
	\begin{subfigure}{.25\linewidth}
		\centering
		\includegraphics[width=\linewidth]{apdx/clu_f3_gt.png}
		\caption{Ground truth 2 mechanisms}
		\label{Fig:mech_1_1}
	\end{subfigure}
	\begin{subfigure}{.25\linewidth}
		\centering
		\includegraphics[width=\linewidth]{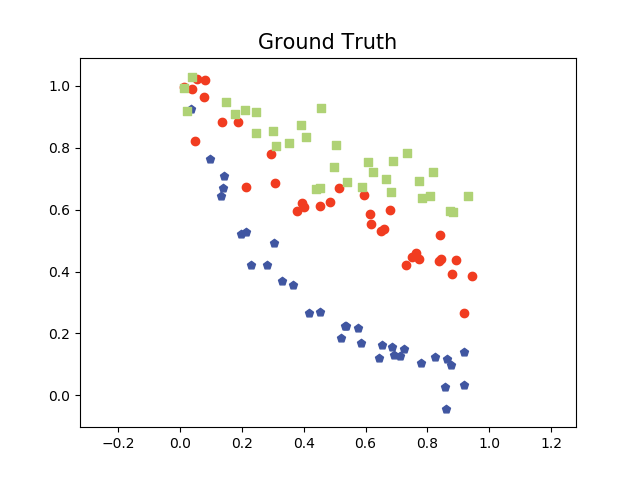}
		\caption{3 mechanisms}
		\label{Fig:mech_1_2}
	\end{subfigure} 
	\begin{subfigure}{.25\linewidth}
		\centering
		\includegraphics[width=\linewidth]{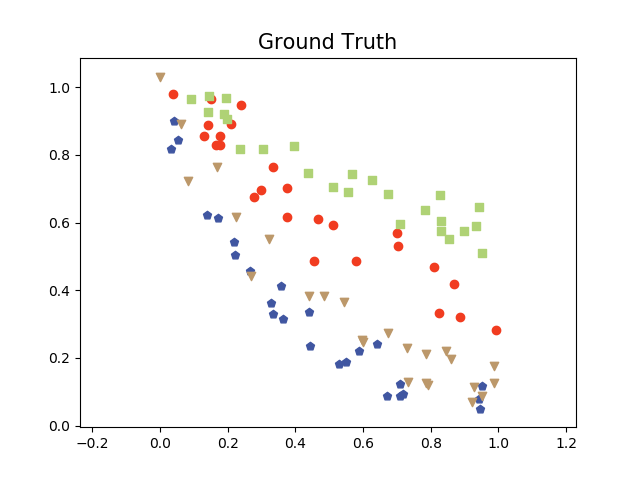}
		\caption{4 mechanisms}
		\label{Fig:mech_1_3}
	\end{subfigure} \\

\begin{subfigure}{.25\linewidth}
	\centering
	\includegraphics[width=\linewidth]{apdx/clu_f3_gp.png}
	\caption{ANM-MM of 2 mechanisms}
	\label{Fig:mech_2_1}
\end{subfigure}
\begin{subfigure}{.25\linewidth}
	\centering
	\includegraphics[width=\linewidth]{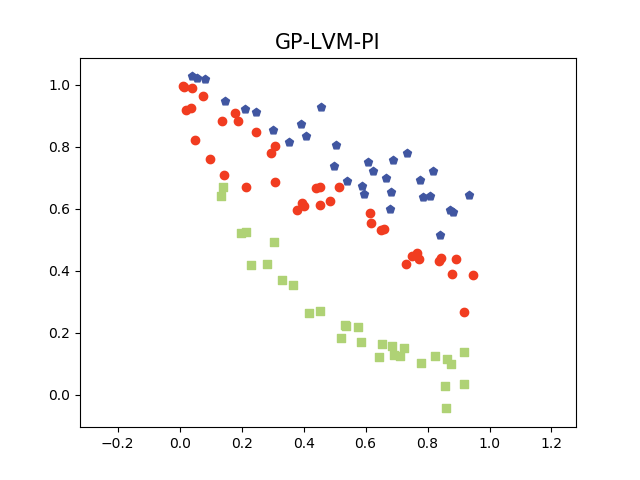}
	\caption{3 mechanisms}
	\label{Fig:mech_2_2}
\end{subfigure} 
\begin{subfigure}{.25\linewidth}
	\centering
	\includegraphics[width=\linewidth]{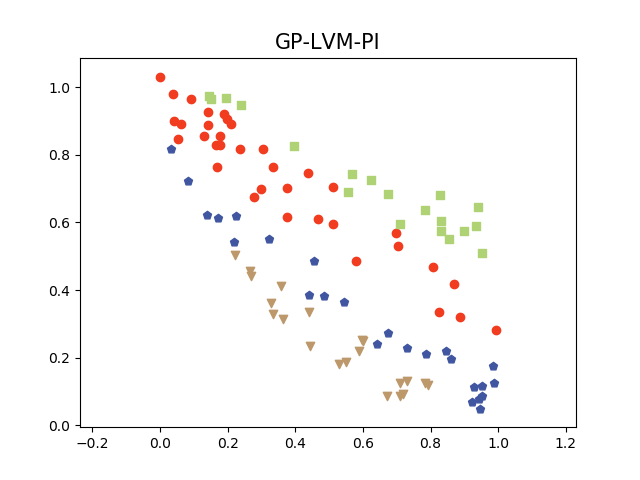}
	\caption{4 mechanisms}
	\label{Fig:mech_2_3}
\end{subfigure} \\

\begin{subfigure}{.25\linewidth}
	\centering
	\includegraphics[width=\linewidth]{apdx/clu_f3_km.png}
	\caption{$k$-means of 2 mechanisms}
	\label{Fig:mech_3_1}
\end{subfigure}
\begin{subfigure}{.25\linewidth}
	\centering
	\includegraphics[width=\linewidth]{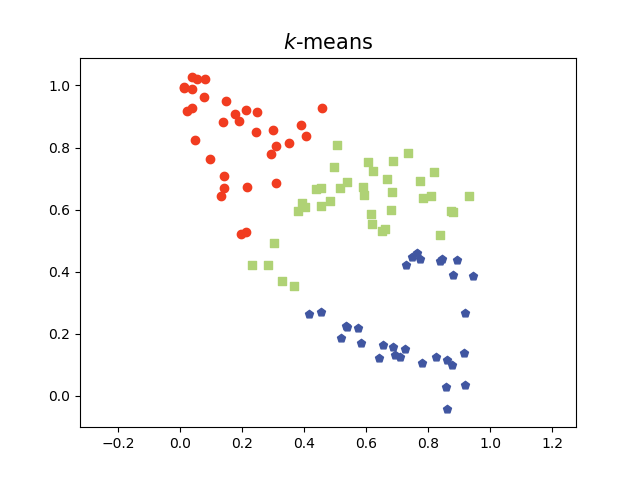}
	\caption{3 mechanisms}
	\label{Fig:mech_3_2}
\end{subfigure} 
\begin{subfigure}{.25\linewidth}
	\centering
	\includegraphics[width=\linewidth]{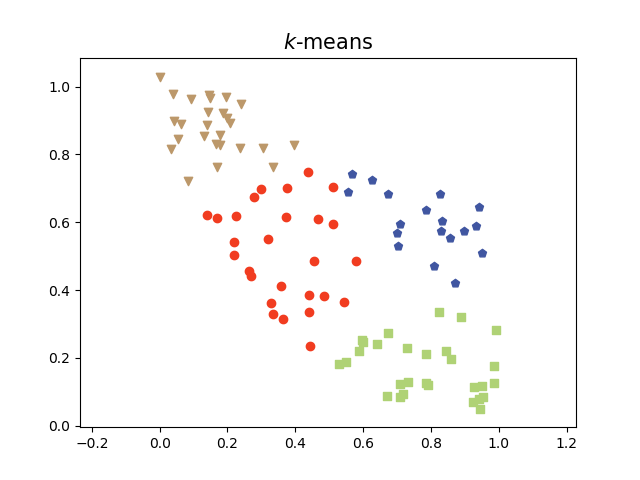}
	\caption{4 mechanisms}
	\label{Fig:mech_3_3}
\end{subfigure} \\
	
\begin{subfigure}{.25\linewidth}
	\centering
	\includegraphics[width=\linewidth]{apdx/clu_f3_gmm.png}
	\caption{GMM of 2 mechanisms}
	\label{Fig:mech_4_1}
\end{subfigure}
\begin{subfigure}{.25\linewidth}
	\centering
	\includegraphics[width=\linewidth]{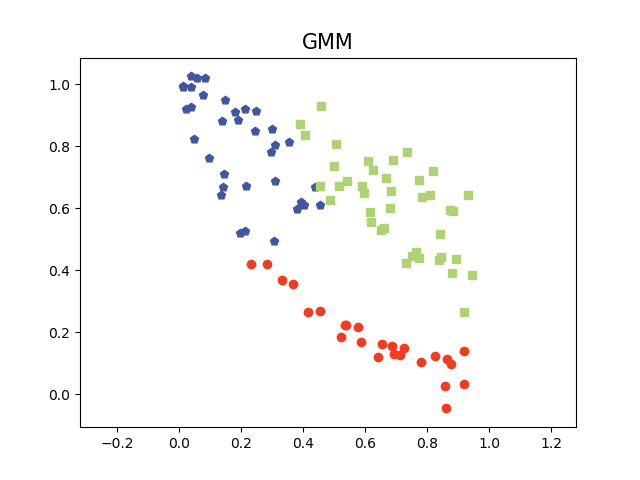}
	\caption{3 mechanisms}
	\label{Fig:mech_4_2}
\end{subfigure} 
\begin{subfigure}{.25\linewidth}
	\centering
	\includegraphics[width=\linewidth]{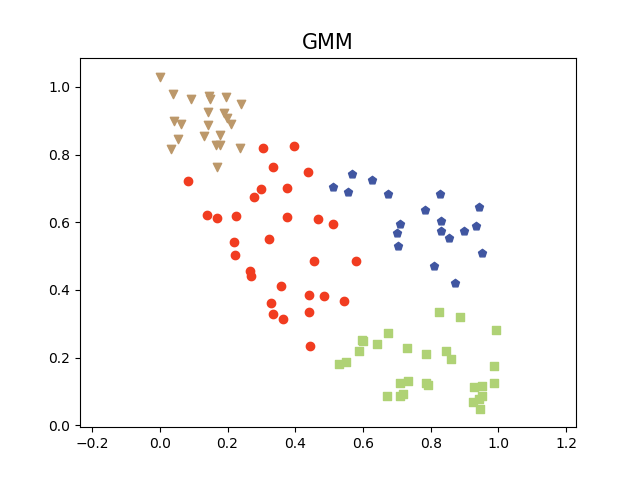}
	\caption{4 mechanisms}
	\label{Fig:mech_4_3}
\end{subfigure} \\
		
\begin{subfigure}{.25\linewidth}
	\centering
	\includegraphics[width=\linewidth]{apdx/clu_f3_spec.png}
	\caption{SpeClu of 2 mechanisms}
	\label{Fig:mech_5_1}
\end{subfigure}
\begin{subfigure}{.25\linewidth}
	\centering
	\includegraphics[width=\linewidth]{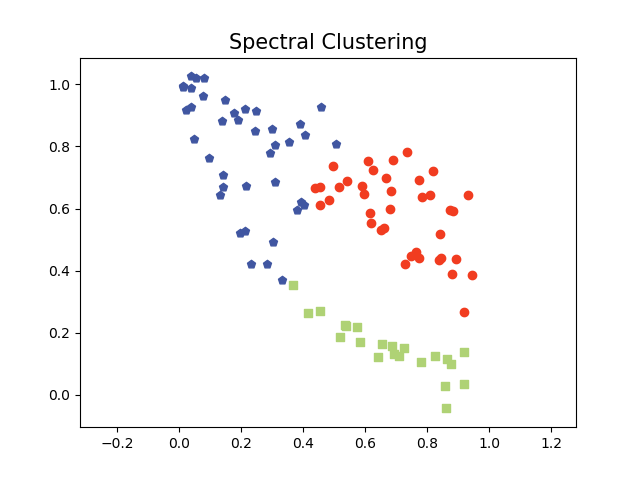}
	\caption{3 mechanisms}
	\label{Fig:mech_5_2}
\end{subfigure} 
\begin{subfigure}{.25\linewidth}
	\centering
	\includegraphics[width=\linewidth]{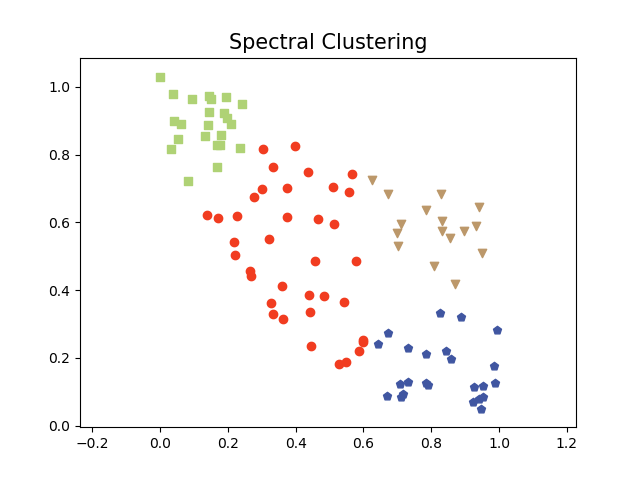}
	\caption{4 mechanisms}
	\label{Fig:mech_5_3}
\end{subfigure} \\
	
\begin{subfigure}{.25\linewidth}
	\centering
	\includegraphics[width=\linewidth]{apdx/clu_f3_db.png}
	\caption{DBSCAN of 2 mechanisms}
	\label{Fig:mech_6_1}
\end{subfigure}
\begin{subfigure}{.25\linewidth}
	\centering
	\includegraphics[width=\linewidth]{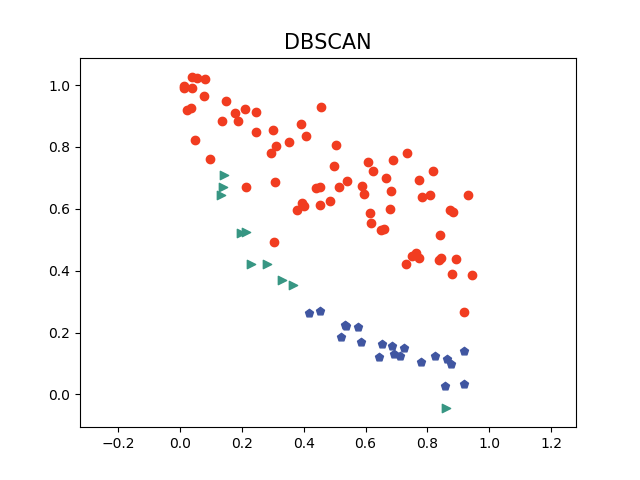}
	\caption{3 mechanisms}
	\label{Fig:mech_6_2}
\end{subfigure} 
\begin{subfigure}{.25\linewidth}
	\centering
	\includegraphics[width=\linewidth]{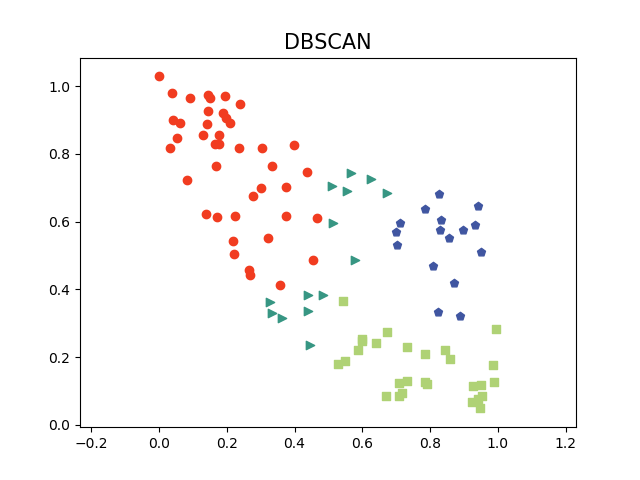}
	\caption{4 mechanisms}
	\label{Fig:mech_6_3}
\end{subfigure} \\
	\caption{Clustering results on different number of mechanisms. The first row shows the ground truth and remaining rows correspond to different clustering approaches. Each column corresponds to a number of generating mechanisms.}
	\label{fig:clu2}
\end{figure}

\subsection{Experiments on different noise standard deviation}
The ground truth and clustering results of all approaches in one of the 100 independent experiments are visualized in Fig. \ref{fig:clu3}.
\begin{figure}[t]
\centering
	\begin{subfigure}{.25\linewidth}
		\centering
		\includegraphics[width=\linewidth]{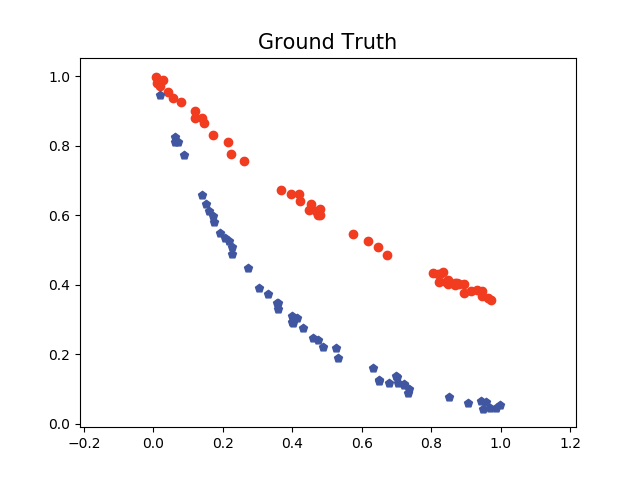}
		\caption{Ground truth $\sigma = 0.01$}
		\label{Fig:noise_1_1}
	\end{subfigure}
	\begin{subfigure}{.25\linewidth}
		\centering
		\includegraphics[width=\linewidth]{apdx/clu_f3_gt.png}
		\caption{$\sigma = 0.05$}
		\label{Fig:noise_1_2}
	\end{subfigure} 
	\begin{subfigure}{.25\linewidth}
		\centering
		\includegraphics[width=\linewidth]{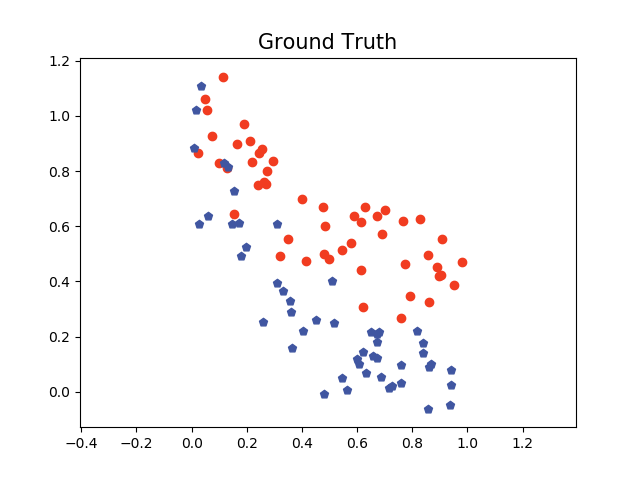}
		\caption{$\sigma = 0.1$}
		\label{Fig:noise_1_3}
	\end{subfigure} \\
	
\begin{subfigure}{.25\linewidth}
	\centering
	\includegraphics[width=\linewidth]{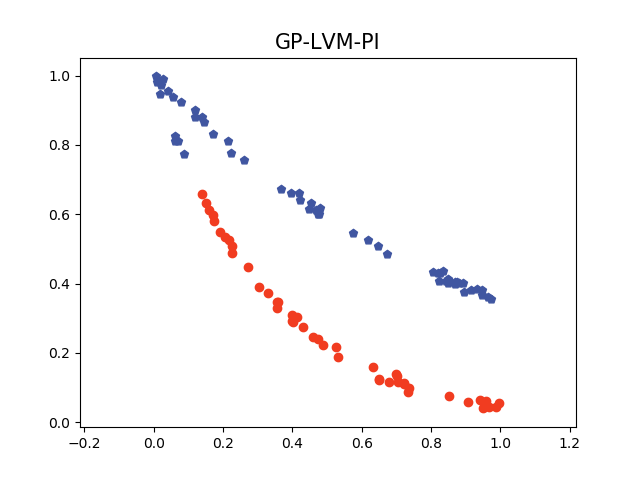}
	\caption{ANM-MM $\sigma = 0.01$}
	\label{Fig:noise_2_1}
\end{subfigure}
\begin{subfigure}{.25\linewidth}
	\centering
	\includegraphics[width=\linewidth]{apdx/clu_f3_gp.png}
	\caption{$\sigma = 0.05$}
	\label{Fig:noise_2_2}
\end{subfigure} 
\begin{subfigure}{.25\linewidth}
	\centering
	\includegraphics[width=\linewidth]{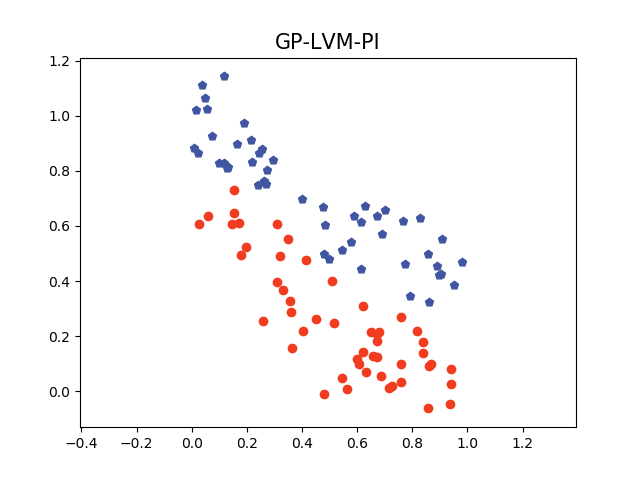}
	\caption{$\sigma = 0.1$}
	\label{Fig:noise_2_3}
\end{subfigure} \\
	
\begin{subfigure}{.25\linewidth}
	\centering
	\includegraphics[width=\linewidth]{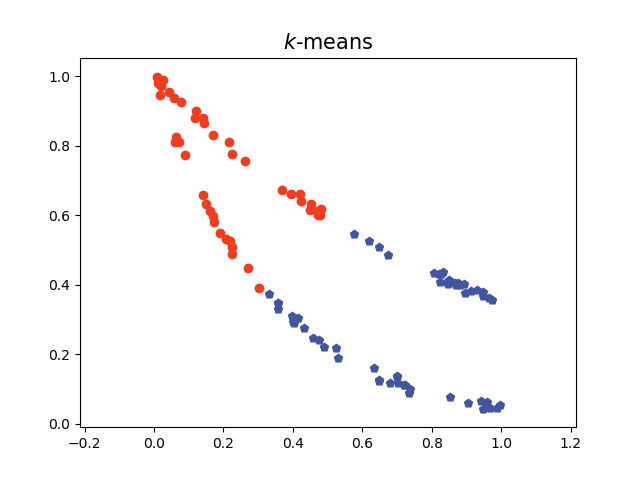}
	\caption{$k$-means $\sigma = 0.01$}
	\label{Fig:noise_3_1}
\end{subfigure}
\begin{subfigure}{.25\linewidth}
	\centering
	\includegraphics[width=\linewidth]{apdx/clu_f3_km.png}
	\caption{$\sigma = 0.05$}
	\label{Fig:noise_3_2}
\end{subfigure} 
\begin{subfigure}{.25\linewidth}
	\centering
	\includegraphics[width=\linewidth]{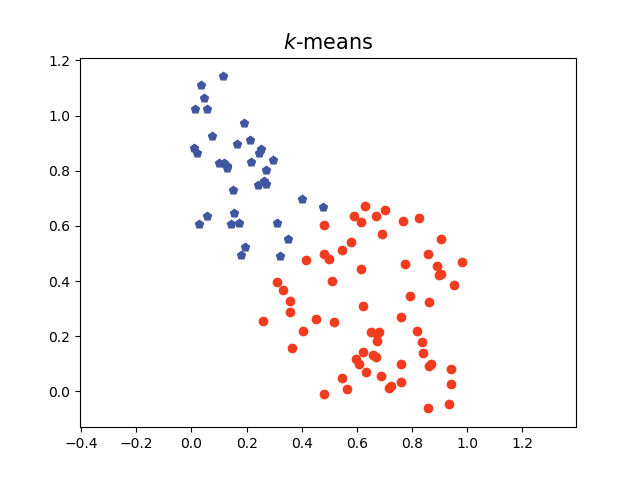}
	\caption{$\sigma = 0.1$}
	\label{Fig:noise_3_3}
\end{subfigure} \\

\begin{subfigure}{.25\linewidth}
	\centering
	\includegraphics[width=\linewidth]{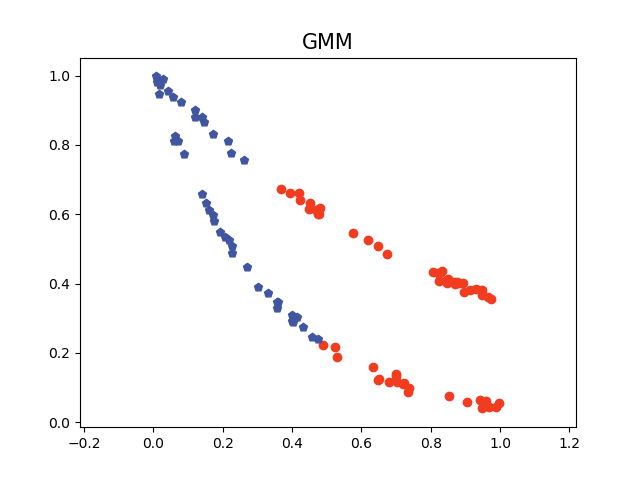}
	\caption{GMM $\sigma = 0.01$}
	\label{Fig:noise_4_1}
\end{subfigure}
\begin{subfigure}{.25\linewidth}
	\centering
	\includegraphics[width=\linewidth]{apdx/clu_f3_gmm.png}
	\caption{$\sigma = 0.05$}
	\label{Fig:noise_4_2}
\end{subfigure} 
\begin{subfigure}{.25\linewidth}
	\centering
	\includegraphics[width=\linewidth]{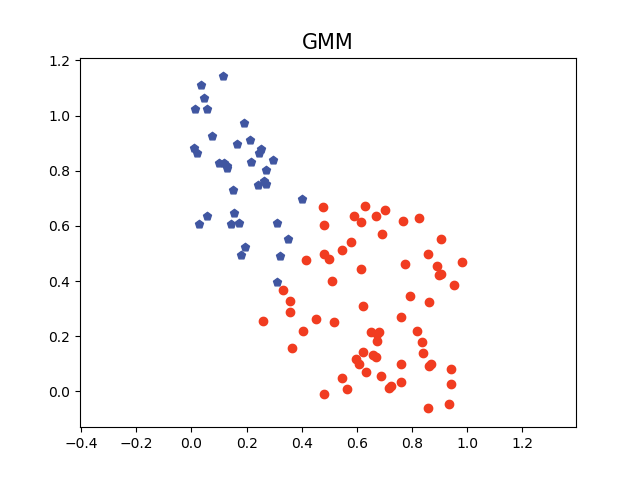}
	\caption{$\sigma = 0.1$}
	\label{Fig:noise_4_3}
\end{subfigure} \\
	
\begin{subfigure}{.25\linewidth}
	\centering
	\includegraphics[width=\linewidth]{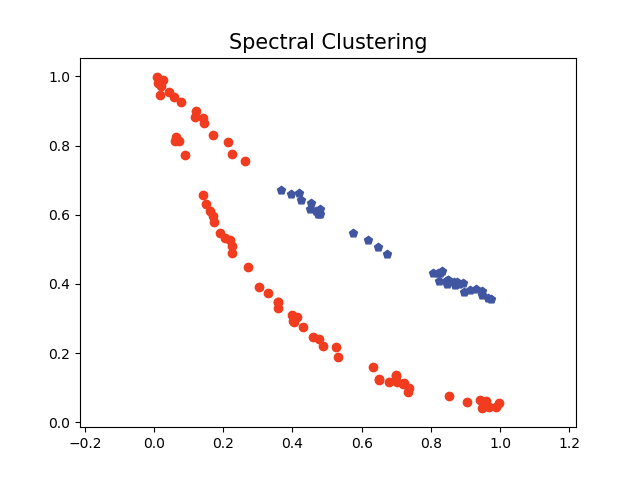}
	\caption{SpeClu $\sigma = 0.01$}
	\label{Fig:noise_5_1}
\end{subfigure}
\begin{subfigure}{.25\linewidth}
	\centering
	\includegraphics[width=\linewidth]{apdx/clu_f3_spec.png}
	\caption{$\sigma = 0.05$}
	\label{Fig:noise_5_2}
\end{subfigure} 
\begin{subfigure}{.25\linewidth}
	\centering
	\includegraphics[width=\linewidth]{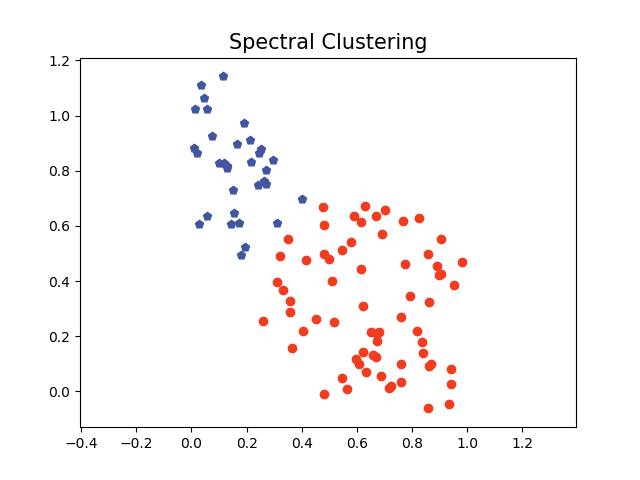}
	\caption{$\sigma = 0.1$}
	\label{Fig:noise_5_3}
\end{subfigure} \\
	
\begin{subfigure}{.25\linewidth}
	\centering
	\includegraphics[width=\linewidth]{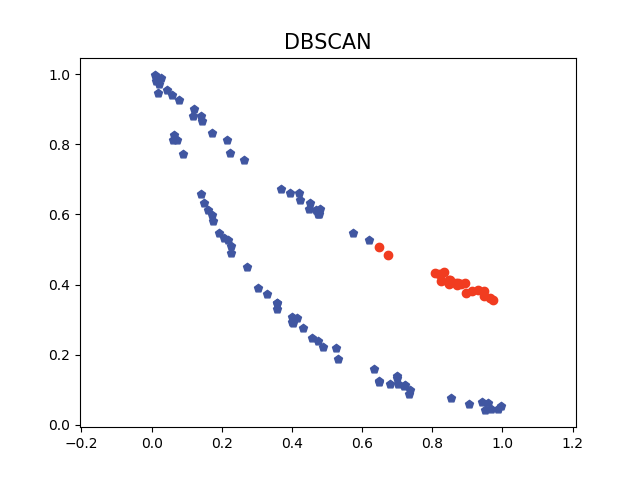}
	\caption{DBSCAN $\sigma = 0.01$}
	\label{Fig:noise_6_1}
\end{subfigure}
\begin{subfigure}{.25\linewidth}
	\centering
	\includegraphics[width=\linewidth]{apdx/clu_f3_db.png}
	\caption{$\sigma = 0.05$}
	\label{Fig:noise_6_2}
\end{subfigure} 
\begin{subfigure}{.25\linewidth}
	\centering
	\includegraphics[width=\linewidth]{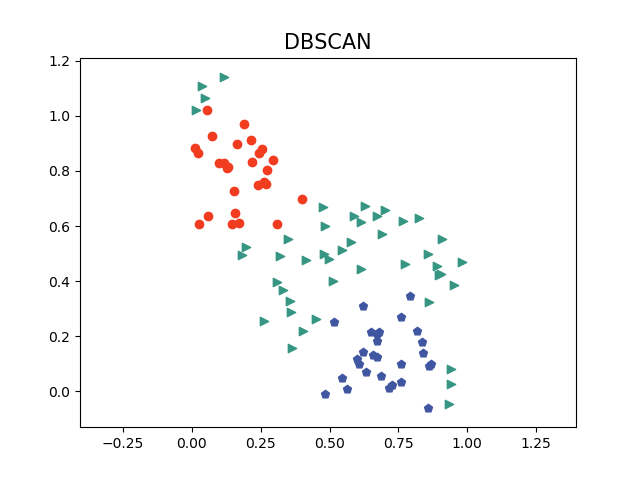}
	\caption{$\sigma = 0.1$}
	\label{Fig:noise_6_3}
\end{subfigure} \\
	\caption{Clustering results on different noise standard deviations. The first row shows the ground truth and remaining rows correspond to different clustering approaches. Each column corresponds to a value of $\sigma$.}
	\label{fig:clu3}
\end{figure}

\subsection{Experiments on different mixing proportions}
The ground truth and clustering results of all approaches in one of the 100 independent experiments are visualized in Fig. \ref{fig:clu4}.
\begin{figure}[t]
\centering
	\begin{subfigure}{.25\linewidth}
		\centering
		\includegraphics[width=\linewidth]{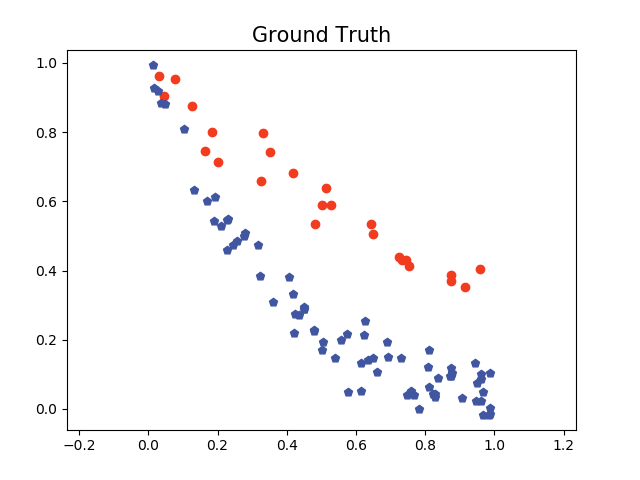}
		\caption{Ground truth $a_{1} = 0.25$}
		\label{Fig:prop_1_1}
	\end{subfigure}
	\begin{subfigure}{.25\linewidth}
		\centering
		\includegraphics[width=\linewidth]{apdx/clu_f3_gt.png}
		\caption{$a_{1} = 0.50$}
		\label{Fig:prop_1_2}
	\end{subfigure} 
	\begin{subfigure}{.25\linewidth}
		\centering
		\includegraphics[width=\linewidth]{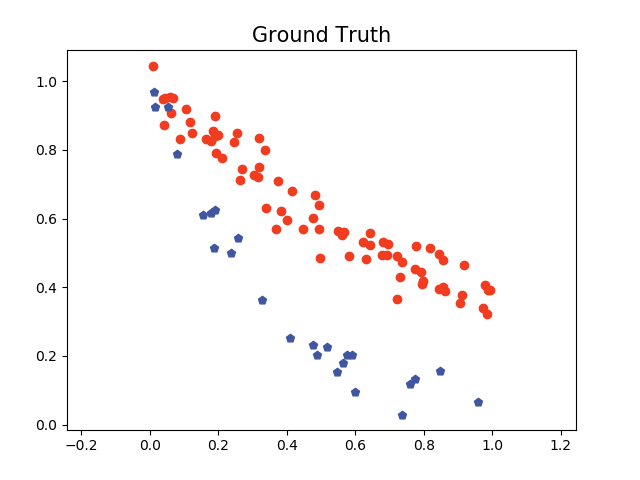}
		\caption{$a_{1} = 0.75$}
		\label{Fig:prop_1_3}
	\end{subfigure} \\
	
\begin{subfigure}{.25\linewidth}
	\centering
	\includegraphics[width=\linewidth]{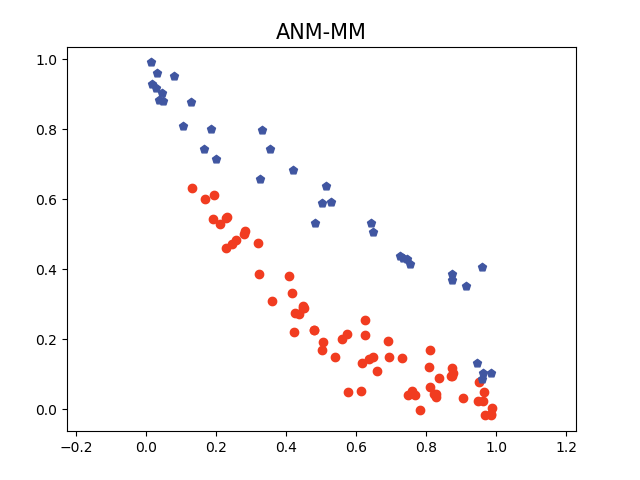}
	\caption{ANM-MM $a_{1} = 0.25$}
	\label{Fig:prop_2_1}
\end{subfigure}
\begin{subfigure}{.25\linewidth}
	\centering
	\includegraphics[width=\linewidth]{apdx/clu_f3_gp.png}
	\caption{$a_{1} = 0.50$}
	\label{Fig:prop_2_2}
\end{subfigure} 
\begin{subfigure}{.25\linewidth}
	\centering
	\includegraphics[width=\linewidth]{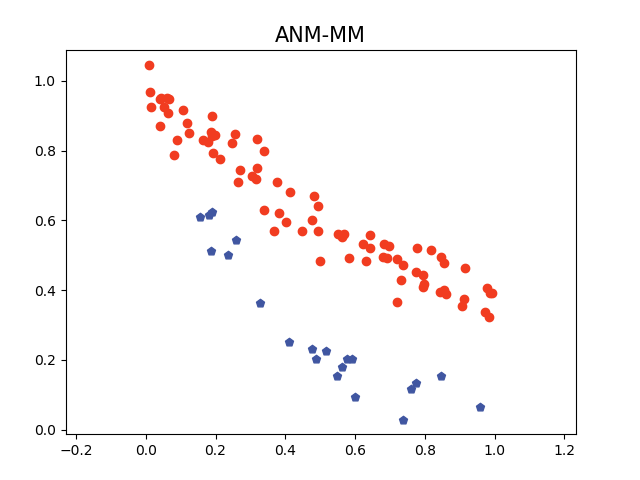}
	\caption{$a_{1} = 0.75$}
	\label{Fig:prop_2_3}
\end{subfigure} \\
	
\begin{subfigure}{.25\linewidth}
	\centering
	\includegraphics[width=\linewidth]{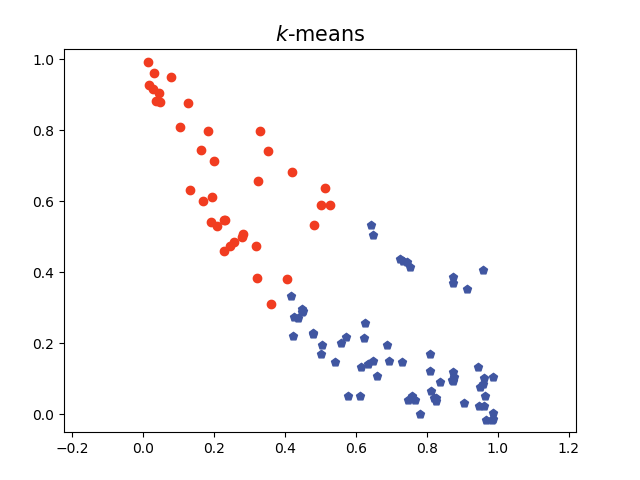}
	\caption{$k$-means $a_{1} = 0.25$}
	\label{Fig:prop_3_1}
\end{subfigure}
\begin{subfigure}{.25\linewidth}
	\centering
	\includegraphics[width=\linewidth]{apdx/clu_f3_km.png}
	\caption{$a_{1} = 0.50$}
	\label{Fig:prop_3_2}
\end{subfigure} 
\begin{subfigure}{.25\linewidth}
	\centering
	\includegraphics[width=\linewidth]{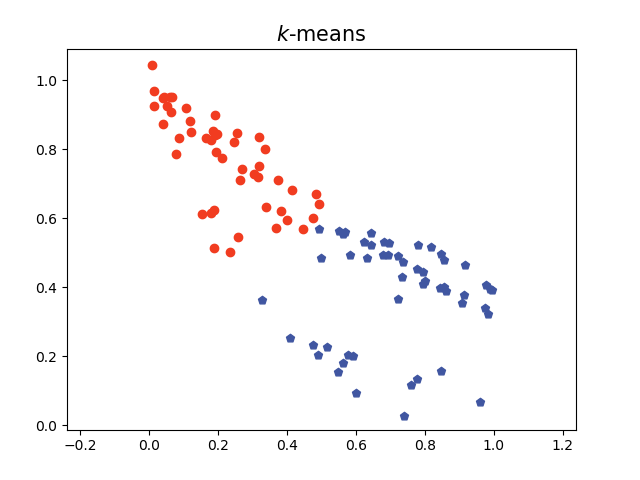}
	\caption{$a_{1} = 0.75$}
	\label{Fig:prop_3_3}
\end{subfigure} \\

\begin{subfigure}{.25\linewidth}
	\centering
	\includegraphics[width=\linewidth]{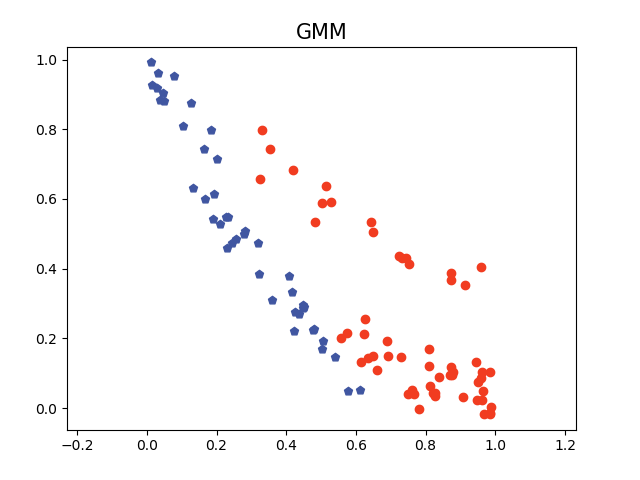}
	\caption{GMM $a_{1} = 0.25$}
	\label{Fig:prop_4_1}
\end{subfigure}
\begin{subfigure}{.25\linewidth}
	\centering
	\includegraphics[width=\linewidth]{apdx/clu_f3_gmm.png}
	\caption{$a_{1} = 0.50$}
	\label{Fig:prop_4_2}
\end{subfigure} 
\begin{subfigure}{.25\linewidth}
	\centering
	\includegraphics[width=\linewidth]{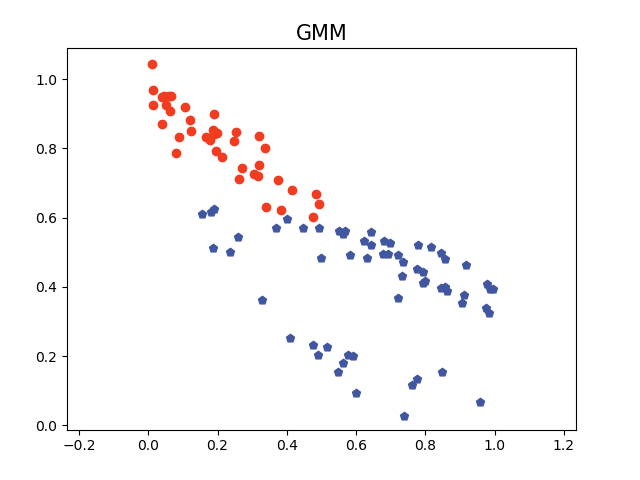}
	\caption{$a_{1} = 0.75$}
	\label{Fig:prop_4_3}
\end{subfigure} \\
	
\begin{subfigure}{.25\linewidth}
	\centering
	\includegraphics[width=\linewidth]{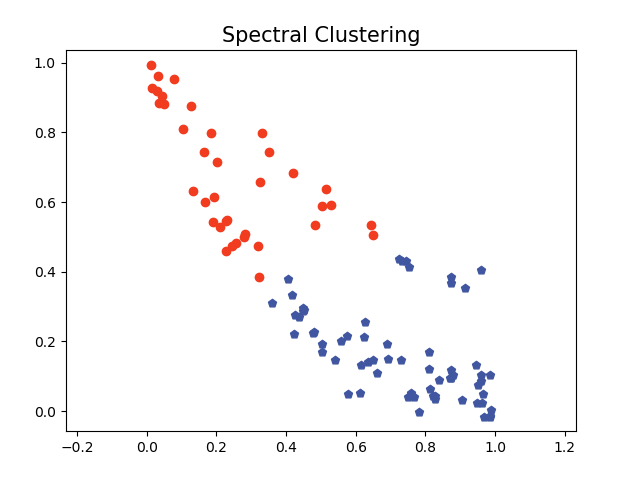}
	\caption{SpeClu $a_{1} = 0.25$}
	\label{Fig:prop_5_1}
\end{subfigure}
\begin{subfigure}{.25\linewidth}
	\centering
	\includegraphics[width=\linewidth]{apdx/clu_f3_spec.png}
	\caption{$a_{1} = 0.50$}
	\label{Fig:prop_5_2}
\end{subfigure} 
\begin{subfigure}{.25\linewidth}
	\centering
	\includegraphics[width=\linewidth]{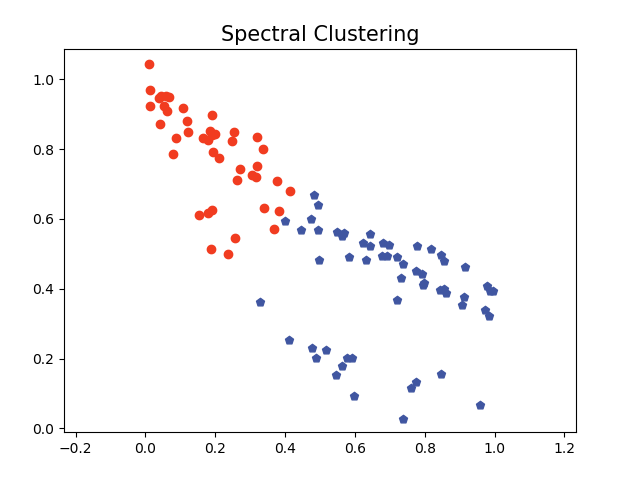}
	\caption{$a_{1} = 0.75$}
	\label{Fig:prop_5_3}
\end{subfigure} \\
	
\begin{subfigure}{.25\linewidth}
	\centering
	\includegraphics[width=\linewidth]{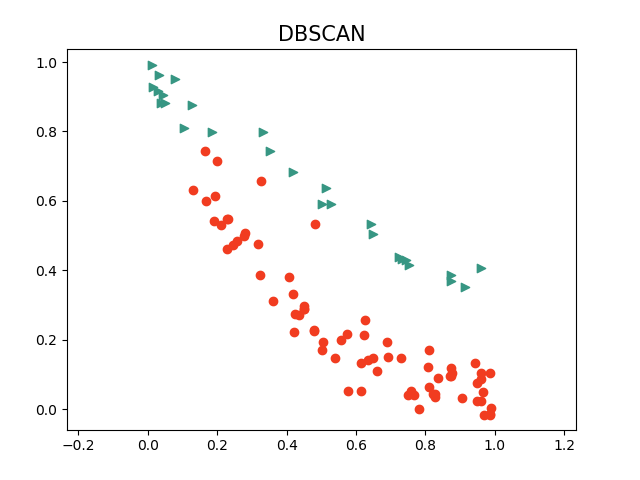}
	\caption{DBSCAN $a_{1} = 0.25$}
	\label{Fig:prop_6_1}
\end{subfigure}
\begin{subfigure}{.25\linewidth}
	\centering
	\includegraphics[width=\linewidth]{apdx/clu_f3_db.png}
	\caption{$a_{1} = 0.50$}
	\label{Fig:prop_6_2}
\end{subfigure} 
\begin{subfigure}{.25\linewidth}
	\centering
	\includegraphics[width=\linewidth]{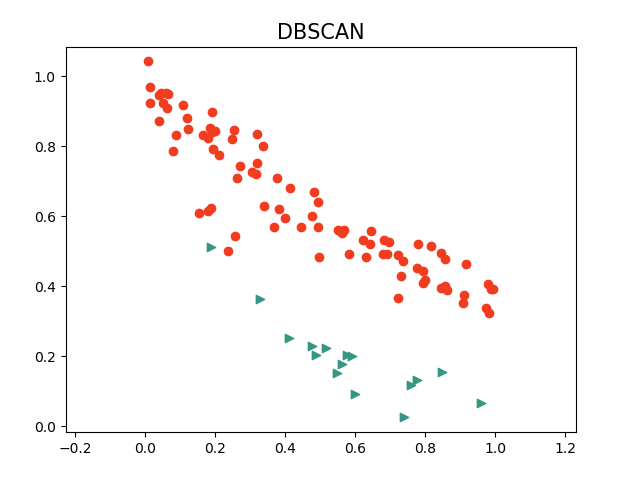}
	\caption{$a_{1} = 0.75$}
	\label{Fig:prop_6_3}
\end{subfigure} \\
	\caption{Clustering results different mixing proportions. The first row shows the ground truth and remaining rows correspond to different clustering approaches. Each column corresponds to a value of $a_{1}$.}
	\label{fig:clu4}
\end{figure}

\end{document}